\documentclass[11pt,twoside]{article}

\usepackage[utf8]{inputenc}
\usepackage[T1]{fontenc}

\usepackage{graphicx}
\usepackage{amsmath,amssymb,amsthm,bm,mathtools}
\usepackage{hyperref}
\usepackage{natbib}
\usepackage{fancyhdr}
\usepackage{geometry}
\usepackage{titlesec}
\usepackage{lipsum}
\usepackage{orcidlink}
\usepackage{booktabs}
\usepackage{placeins}
\usepackage{float}
\usepackage[ruled,vlined,linesnumbered]{algorithm2e}

\setcitestyle{authoryear}
\newtheorem{theorem}{Theorem}[section]
\newtheorem{proposition}[theorem]{Proposition}
\newtheorem{lemma}[theorem]{Lemma}
\newtheorem{corollary}[theorem]{Corollary}
\newtheorem{remark}[theorem]{Remark}

\hypersetup{
colorlinks=true,
linkcolor=blue,
filecolor=blue,
citecolor=blue,
urlcolor=black
}
\fancypagestyle{plain}{
\fancyhf{}

}
\title{
\vspace{-1.5em}
\hrule height 1.5pt
\vspace{0.8em}
PDGMM-VAE: A Variational Autoencoder with Adaptive Per-Dimension Gaussian Mixture Model Priors for Nonlinear ICA
\vspace{0.8em}
\hrule height 1.5pt
\vspace{1em}
}

\author{%
\begin{minipage}[t]{.48\textwidth}\centering\small
  \textbf{Yuan-Hao Wei\orcidlink{0000-0001-9439-0780}}\\
  Hong Kong Polytechnic University \\
  \texttt{Yuan-Hao.Wei@outlook.com}
\end{minipage}\hfill
\begin{minipage}[t]{.48\textwidth}\centering\small
  \textbf{Yan-Jie Sun\orcidlink{0000-0002-7967-6382}}\\
  Hong Kong Polytechnic University \\
  \texttt{Yanjie.Sun@connect.polyu.hk}
\end{minipage}
}

\date{}
\begin{document}
\maketitle
    \thispagestyle{plain}
        \begin{abstract}
            Independent component analysis is a core framework within blind source separation for recovering latent source signals from observed mixtures under statistical independence assumptions. In this work, we propose PDGMM-VAE, a source-oriented variational autoencoder in which each latent dimension, interpreted explicitly as an individual source component, is assigned its own adaptive Gaussian mixture model prior. The proposed framework imposes heterogeneous per-dimension prior constraints, enabling different latent dimensions to model different non-Gaussian source marginals within a unified probabilistic encoder-decoder architecture. The parameters of these source-specific GMM priors are not fixed in advance, but are jointly learned together with the encoder and decoder under the overall training objective. Beyond the model construction itself, we provide a theoretical analysis clarifying why adaptive per-dimension prior design is meaningful in this setting. In particular, we show that heterogeneous per-dimension priors reduce latent permutation symmetry relative to homogeneous shared priors, and we further show that the KL regularization induced by the adaptive GMM prior creates source-specific attraction behavior that helps explain source-wise specialization during training. We also clarify the relation of the proposed model to the standard VAE and provide a weak recovery statement in an idealized linear low-noise regime. Experimental results on both linear and nonlinear mixing problems show that PDGMM-VAE can recover latent source signals and fit source-specific non-Gaussian marginals effectively. These results suggest that adaptive per-dimension mixture-prior design provides a principled and promising direction for VAE-based ICA and source-oriented generative modeling.

        \end{abstract}

        \noindent\textbf{Keywords:} variational autoencoder (VAE); Gaussian mixture model (GMM); per-dimension prior; nonlinear ICA
    
\section{Introduction}

        Independent component analysis (ICA) is a foundational framework for recovering latent source signals from observed mixtures under statistical independence assumptions. Within the broader field of blind source separation (BSS), ICA has been one of the most influential formulations, particularly in the linear case where the latent sources are typically assumed to be non-Gaussian \citep{hyvarinen2000ica,hyvarinen2001ica}. In many classical applications, such as speech mixtures, biomedical recordings, and multichannel sensor measurements, ICA has served as a principled route to source recovery and interpretable latent decomposition \citep{hyvarinen2000ica,hyvarinen2001ica}.
        
        Compared with linear ICA, nonlinear ICA is substantially more challenging. Classical results showed that, without additional assumptions, nonlinear mixtures generally do not admit the same kind of straightforward identifiability enjoyed in the linear case \citep{hyvarinen1999nonlinearica}. More recent work has therefore investigated what kinds of structure are sufficient to recover latent independent components. Important directions include the use of auxiliary variables and generalized contrastive learning \citep{hyvarinen2019auxica}, identifiable latent-variable formulations based on exponential-family structure and variational autoencoders \citep{khemakhem2020vaeica}, and more recent studies on unconditional or weakly restricted identifiability in nonlinear ICA \citep{willetts2021idontneedu,buchholz2022functionclasses,zheng2022sparsity}. These advances have also clarified the close relationship between nonlinear ICA, disentangled representation learning, and identifiable deep generative modeling \citep{hyvarinen2023review,kivva2022mixture}.
        
        Variational autoencoders (VAEs) provide a natural probabilistic framework for addressing ICA-like problems. When the latent variables are interpreted as source signals, the encoder may be viewed as a demixing mapping from observations to latent sources, while the decoder acts as a remixing or generative mapping back to the observation space. This perspective is especially appealing in nonlinear settings, where deep neural networks offer expressive modeling of complex mixing and demixing transformations. In particular, \citet{khemakhem2020vaeica} established a notable connection between VAEs and nonlinear ICA by showing that identifiable latent-variable models can be obtained under suitable structural assumptions. This connection suggests that VAE-based source models are not merely heuristic architectures, but part of a broader probabilistic route toward interpretable nonlinear source separation.
        
        At the same time, most existing VAE work using Gaussian mixture priors has largely pursued a different objective. A prominent line of research employs Gaussian-mixture latent structure for unsupervised clustering, deep clustering, or disentangled clustering, where the mixture components represent clusters, classes, or semantic facets of the data rather than source signals to be separated. Representative examples include GMVAE-style clustering models \citep{dilokthanakul2016gmvae}, Variational Deep Embedding (VaDE) \citep{jiang2017vade}, latent-tree or multidimensional clustering VAEs \citep{li2018ltvae}, Gaussian Mixture Variational Ladder Autoencoders \citep{willetts2019vlac}, and Multi-Facet Clustering Variational Autoencoders (MFCVAE) \citep{falck2021mfcvae}. These works have demonstrated the value of mixture priors for clustering-oriented latent organization, but they do not systematically study the setting in which each latent dimension is explicitly interpreted as an independent source and assigned its own source-specific Gaussian mixture prior for ICA.
        
        A related point is that, in our earlier Half-VAE study, we briefly noted the feasibility of assigning different Gaussian mixture priors to different latent dimensions in an encoder-free ICA setting \citep{wei2024halfvae}. However, that work focused on bypassing explicit inverse mapping by removing the encoder, rather than systematically developing a full encoder-decoder VAE for ICA. The present study takes the next step by formulating a source-oriented VAE in which each latent dimension is treated explicitly as an individual source and endowed with its own Gaussian mixture model prior. This per-dimension prior design allows different latent sources to follow distinct non-Gaussian distributions, thereby imposing heterogeneous independence-promoting constraints across latent dimensions.
        
        To this end, we propose \textbf{PDGMM-VAE}, a per-dimension Gaussian-mixture-model-prior variational autoencoder for nonlinear ICA. In the proposed framework, the encoder infers latent source variables from observed mixtures, the decoder reconstructs the mixtures from the inferred sources, and each latent source is regularized by its own learnable GMM prior. Importantly, the mixture weights, component means, and variances of these priors are not fixed in advance, but are adaptively learned and jointly optimized with the encoder and decoder parameters under the overall objective, thereby being automatically refined toward convergence during training. In contrast to clustering-oriented mixture-prior VAEs, our goal is not to partition data samples into latent classes, but to recover independent latent source signals under a probabilistic generative model. Experiments on both linear and nonlinear mixing problems show that the proposed method can effectively recover latent sources and achieve strong separation performance. More broadly, this work provides a systematic study of per-dimension GMM priors in VAE-based ICA and establishes a foundation for future research on structured priors, interpretability, and identifiability in generative source separation models.
        
\section{Related Work}

\subsection{ICA, BSS, and nonlinear identifiability}

ICA is one of the foundational frameworks for blind source separation, particularly when the latent sources are assumed to be statistically independent and non-Gaussian \citep{hyvarinen2000ica,hyvarinen2001ica}. In classical linear settings, this principle has enabled effective recovery of latent components from observed mixtures and has supported a wide range of applications in signal processing and machine learning \citep{hyvarinen2000ica}. However, extending ICA to nonlinear mixtures is considerably more difficult. Early work showed that nonlinear ICA is generally non-identifiable without additional assumptions \citep{hyvarinen1999nonlinearica}. Recent progress has therefore focused on identifying sufficient structure for recoverability, including auxiliary-variable formulations \citep{hyvarinen2019auxica}, identifiable deep latent-variable models \citep{khemakhem2020vaeica}, unconditional identifiability under restricted function classes \citep{buchholz2022functionclasses}, and broader analyses of sparsity or structural assumptions in nonlinear ICA \citep{zheng2022sparsity}. These developments have made nonlinear ICA an increasingly important bridge between source separation, disentanglement, and identifiable representation learning \citep{hyvarinen2023review}.

\subsection{VAE-based nonlinear ICA and generative source models}

The connection between VAEs and nonlinear ICA has become clearer in recent years. In particular, \citet{khemakhem2020vaeica} showed that a large family of identifiable nonlinear ICA models can be expressed within a VAE-like latent-variable framework under suitable assumptions. This result is important because it places probabilistic encoder-decoder models within the theoretical landscape of source recovery rather than treating them purely as black-box generative models. Related studies have further explored identifiability in deep generative models under mixture priors or without auxiliary information, highlighting both the promise and the subtle limitations of probabilistic latent-variable approaches for representation recovery \citep{willetts2021idontneedu,kivva2022mixture}.

\subsection{Gaussian-mixture-prior VAEs for clustering and disentanglement}

A separate and influential line of work uses Gaussian mixture priors in VAEs to induce clustering structure in latent space. GMVAE-type models explicitly adopt Gaussian mixture latent distributions for deep unsupervised clustering \citep{dilokthanakul2016gmvae}, while VaDE combines a GMM with a VAE to obtain a generative clustering model with variational inference over latent embeddings and cluster assignments \citep{jiang2017vade}. Subsequent works extended this idea in different directions. LTVAE introduced latent superstructures to support multidimensional clustering \citep{li2018ltvae}; Gaussian Mixture Variational Ladder Autoencoders organized clustering variables hierarchically across latent layers \citep{willetts2019vlac}; and MFCVAE further developed this line into a multi-facet clustering framework in which each facet is associated with a mixture-of-Gaussians prior and the objective is to discover multiple meaningful partitions of the data \citep{falck2021mfcvae}. Although these models differ in architecture and training strategy, they share a common emphasis: the mixture prior is used primarily to organize data samples into clusters or semantic facets.

This clustering-oriented use of Gaussian mixtures is fundamentally different from the goal of the present work. In our setting, the latent dimensions are not intended to encode cluster identity or facet-specific sample groupings. Instead, each latent dimension is explicitly interpreted as an individual source signal, and the role of the prior is to model source-specific non-Gaussianity so as to promote source separation. Thus, while previous mixture-prior VAEs are highly relevant as methodological neighbors, they do not directly address the problem of per-dimension GMM priors for ICA.

\subsection{Relation to our previous work}

The present study is also closely related to our previous line of research on VAE-based ICA and source separation, in which latent dimensions are explicitly interpreted as source signals and different forms of structured priors are introduced to improve source recovery, interpretability, and adaptability to different data characteristics \citep{wei2024halfvae,wei2024innovative,wei2025structured,wei2026ar}. From this broader perspective, the current work should not be viewed as an isolated model, but rather as part of an ongoing effort to develop source-oriented VAE frameworks in which prior design plays a central role in guiding the separation of latent components.

Among our earlier studies, Half-VAE \citep{wei2024halfvae} investigated an encoder-free formulation for ICA, where latent variables were directly optimized as trainable parameters in order to bypass explicit inverse mapping. That work briefly suggested that different latent dimensions could be equipped with different GMM priors, indicating the feasibility of per-dimension mixture-based priors for ICA, but it did not provide a systematic study of a full encoder-decoder VAE with source-specific GMM prior learning. More broadly, our previous works have explored several complementary directions for structured prior design. PAVAE \citep{wei2024innovative} further examined adaptive prior-oriented VAE formulations for source inference, emphasizing the role of prior parameterization in improving latent decomposition. Structured Kernel Regression VAE (SKR-VAE) \citep{wei2025structured} focused on kernel-structured priors as a computationally efficient surrogate for GP-based latent modeling, making such structure more practical for ICA settings in which different latent dimensions may exhibit different correlation patterns. AR-Flow VAE \citep{wei2026ar}, in turn, extended this line by introducing autoregressive flow priors to provide more flexible modeling of complex non-Gaussian behavior and structured latent dependencies.

Taken together, these studies share a common methodological philosophy: different source models may require different prior structures, and structured prior design is a key mechanism for adapting VAE-based ICA/BSS models to different signal assumptions and data types. Under this view, the proposed PDGMM-VAE occupies a distinct and well-motivated position in our broader research program. Unlike kernel-based or flow-based priors, which are especially suitable when one wishes to capture temporal, spatial, or more complex dependency structures, the per-dimension GMM priors adopted here are particularly appropriate for the present ICA-oriented setting, where the simulated sources are i.i.d.\ and the main emphasis is on source-wise non-Gaussianity and mutual independence. Thus, the contribution of this work is not merely to reuse a previously mentioned idea, but to develop the per-dimension GMM-prior direction into a full VAE framework and to study it systematically for linear and nonlinear ICA.

\section{Methodology}

\subsection{Problem formulation}

Consider an observed mixture sequence
\begin{equation}
\mathbf{Y}
=
[\mathbf{y}_1,\mathbf{y}_2,\dots,\mathbf{y}_T]^\top
\in \mathbb{R}^{T\times m},
\end{equation}
where $T$ denotes the number of samples and $m$ denotes the observation dimension. The objective is to infer a latent source sequence
\begin{equation}
\mathbf{Z}
=
[\mathbf{z}_1,\mathbf{z}_2,\dots,\mathbf{z}_T]^\top
\in \mathbb{R}^{T\times n},
\end{equation}
where $n$ is the number of latent source components.

In the proposed framework, each latent dimension is interpreted explicitly as a source-related component. Under this view, the source separation problem is reformulated as a latent-variable generative modeling problem in which the latent variables correspond to source-like components and the observations are treated as mixtures generated from these latent variables through an unknown mapping. This motivates the use of a variational autoencoder (VAE), where the encoder plays the role of an observation-to-latent demixing map and the decoder plays the role of a latent-to-observation remixing map.

For each sample index $t=1,\dots,T$, the observation $\mathbf{y}_t$ is associated with a latent vector $\mathbf{z}_t\in\mathbb{R}^{n}$. The central modeling question is then how to impose a latent prior that is sufficiently expressive to promote source recovery. Instead of assigning the same simple prior to all latent dimensions, we endow each latent dimension with its own adaptive one-dimensional Gaussian mixture model (GMM) prior. This per-dimension prior design allows different latent dimensions to model different non-Gaussian source marginals and thereby promotes source-wise specialization.

\subsection{Generative model and variational posterior}

We write the joint distribution as
\begin{equation}
p_{\theta,\psi}(\mathbf{Y},\mathbf{Z})
=
p_{\theta}(\mathbf{Y}\mid\mathbf{Z})\,p_{\psi}(\mathbf{Z}),
\end{equation}
where $\theta$ denotes the decoder parameters and $\psi$ collects the parameters of all per-dimension GMM priors.

Assuming factorization over samples, we have
\begin{equation}
p_{\theta}(\mathbf{Y}\mid\mathbf{Z})
=
\prod_{t=1}^{T} p_{\theta}(\mathbf{y}_t\mid \mathbf{z}_t),
\qquad
p_{\psi}(\mathbf{Z})
=
\prod_{t=1}^{T} p_{\psi}(\mathbf{z}_t).
\end{equation}
Moreover, because the prior is factorized across latent dimensions,
\begin{equation}
p_{\psi}(\mathbf{z}_t)
=
\prod_{j=1}^{n} p_{\psi_j}(z_{t,j}).
\end{equation}

The decoder $g_\theta(\cdot)$ maps a latent vector back to the observation space:
\begin{equation}
\hat{\mathbf{y}}_t = g_\theta(\mathbf{z}_t),
\qquad
\hat{\mathbf{y}}_t\in\mathbb{R}^{m}.
\end{equation}
When no hidden layer is used, $g_\theta(\cdot)$ reduces to a linear mapping; when nonlinear hidden layers are included, the model can represent more general nonlinear mixing mechanisms.

Since the exact posterior $p_{\theta,\psi}(\mathbf{Z}\mid\mathbf{Y})$ is intractable in general, we introduce a variational posterior
\begin{equation}
q_\phi(\mathbf{Z}\mid\mathbf{Y})
=
\prod_{t=1}^{T} q_\phi(\mathbf{z}_t\mid\mathbf{y}_t),
\end{equation}
parameterized by an encoder network $f_\phi(\cdot)$. For each sample $t$,
\begin{equation}
\boldsymbol{\mu}_t = f_\phi(\mathbf{y}_t)
=
[\mu_{t,1},\mu_{t,2},\dots,\mu_{t,n}]^\top.
\end{equation}
The approximate posterior is chosen to be factorized across latent dimensions:
\begin{equation}
q_\phi(\mathbf{z}_t\mid\mathbf{y}_t)
=
\prod_{j=1}^{n} q_\phi(z_{t,j}\mid\mathbf{y}_t),
\end{equation}
with
\begin{equation}
q_\phi(z_{t,j}\mid\mathbf{y}_t)
=
\mathcal{N}(z_{t,j}\mid \mu_{t,j}, \sigma_j^2),
\end{equation}
where $\mu_{t,j}$ depends on $\mathbf{y}_t$ through the encoder, while $\sigma_j^2$ is a learnable scalar variance shared by the $j$-th latent source dimension across all sample indices. Therefore,
\begin{equation}
q_\phi(\mathbf{Z}\mid\mathbf{Y})
=
\prod_{t=1}^{T}\prod_{j=1}^{n}
\mathcal{N}(z_{t,j}\mid \mu_{t,j}, \sigma_j^2).
\end{equation}

This parameterization keeps the posterior stochastic while avoiding an excessive number of free variance parameters. It also preserves the source-wise interpretation by assigning one global posterior variance to each latent dimension.

\subsection{Reparameterization of latent variables}

To enable gradient-based optimization through stochastic latent variables, we use the reparameterization trick. For each $t$ and $j$,
\begin{equation}
z_{t,j}
=
\mu_{t,j} + \sigma_j \epsilon_{t,j},
\qquad
\epsilon_{t,j}\sim \mathcal{N}(0,1).
\end{equation}
Equivalently, in vector form,
\begin{equation}
\mathbf{z}_t
=
\boldsymbol{\mu}_t + \boldsymbol{\sigma}\odot \boldsymbol{\epsilon}_t,
\qquad
\boldsymbol{\epsilon}_t \sim \mathcal{N}(\mathbf{0},\mathbf{I}),
\end{equation}
where
\begin{equation}
\boldsymbol{\sigma}
=
[\sigma_1,\sigma_2,\dots,\sigma_n]^\top
\end{equation}
and $\odot$ denotes element-wise multiplication.

\subsection{Per-dimension Gaussian mixture prior}

Instead of imposing the same standard normal prior on every latent dimension, we assign to each latent source dimension $j$ an independent one-dimensional GMM prior:
\begin{equation}
p_{\psi_j}(z_{t,j})
=
\sum_{k=1}^{K}
\pi_{j,k}
\,
\mathcal{N}\!\left(
z_{t,j}\mid \mu_{j,k}^{(p)}, (\sigma_{j,k}^{(p)})^2
\right),
\end{equation}
where $K$ is the number of mixture components, $\pi_{j,k}$ is the mixture weight, $\mu_{j,k}^{(p)}$ is the component mean, and $(\sigma_{j,k}^{(p)})^2$ is the component variance for the $k$-th component of latent dimension $j$.

The mixture weights satisfy
\begin{equation}
\pi_{j,k}\ge 0,
\qquad
\sum_{k=1}^{K}\pi_{j,k}=1,
\end{equation}
and are parameterized through softmax logits $\alpha_{j,k}$:
\begin{equation}
\pi_{j,k}
=
\frac{\exp(\alpha_{j,k})}{\sum_{\ell=1}^{K}\exp(\alpha_{j,\ell})}.
\end{equation}
Similarly, the component variances are parameterized in log-form to guarantee positivity:
\begin{equation}
(\sigma_{j,k}^{(p)})^2 = \exp(\eta_{j,k}).
\end{equation}

Because the prior is factorized across source dimensions,
\begin{equation}
p_{\psi}(\mathbf{z}_t)
=
\prod_{j=1}^{n} p_{\psi_j}(z_{t,j})
=
\prod_{j=1}^{n}
\left[
\sum_{k=1}^{K}
\pi_{j,k}
\,
\mathcal{N}\!\left(
z_{t,j}\mid \mu_{j,k}^{(p)}, (\sigma_{j,k}^{(p)})^2
\right)
\right].
\end{equation}
Hence the prior over the whole latent dataset is
\begin{equation}
p_{\psi}(\mathbf{Z})
=
\prod_{t=1}^{T} p_{\psi}(\mathbf{z}_t)
=
\prod_{t=1}^{T}\prod_{j=1}^{n}
p_{\psi_j}(z_{t,j}).
\end{equation}

Correspondingly, the log-prior density can be written as
\begin{equation}
\log p_{\psi}(\mathbf{Z})
=
\sum_{t=1}^{T}\sum_{j=1}^{n}
\log p_{\psi_j}(z_{t,j}),
\end{equation}
with
\begin{equation}
\log p_{\psi_j}(z_{t,j})
=
\log
\left[
\sum_{k=1}^{K}
\pi_{j,k}
\,
\mathcal{N}\!\left(
z_{t,j}\mid \mu_{j,k}^{(p)}, (\sigma_{j,k}^{(p)})^2
\right)
\right].
\end{equation}

This prior is substantially more flexible than the conventional isotropic Gaussian prior. In particular, it allows each latent dimension to model multimodal, heavy-tailed, asymmetric, or otherwise non-Gaussian source marginals. More importantly, because different latent dimensions are endowed with different prior parameters, the resulting latent regularization is heterogeneous across dimensions rather than homogeneous.
\begin{remark}[One-dimensional marginal expressiveness of Gaussian mixtures]
The role of the per-dimension GMM prior is not only algorithmic but also functional. Let $f$ be a continuous probability density on $\mathbb R$ such that $f(x)\to 0$ as $|x|\to\infty$. Then for any $\varepsilon>0$, there exists a finite Gaussian mixture
\begin{equation}
g_K(x)
=
\sum_{k=1}^{K}\pi_k\,\mathcal{N}(x\mid m_k,v_k),
\qquad
\pi_k>0,\quad
\sum_{k=1}^{K}\pi_k=1,\quad
v_k>0,
\end{equation}
such that
\begin{equation}
\|f-g_K\|_{L_1(\mathbb R)}<\varepsilon.
\end{equation}
Under additional regularity and tail conditions, one may further obtain arbitrarily small divergence in stronger senses such as KL divergence. Therefore, from the viewpoint of one-dimensional source marginals, the adopted per-dimension GMM prior family is sufficiently expressive to approximate a broad class of continuous non-Gaussian source distributions.

This observation should be interpreted carefully. It does not by itself imply exact source recovery or full identifiability, since recoverability still depends on the decoder family, the variational posterior family, the optimization path, and any additional structural assumptions. It does, however, show that the prior family itself is not the main bottleneck at the level of one-dimensional marginal expressiveness.
\end{remark}

\subsection{Observation model, ELBO interpretation, and implemented training objective}

For probabilistic interpretation, one may introduce a Gaussian observation model
\begin{equation}
p_{\theta}(\mathbf{y}_t\mid\mathbf{z}_t)
=
\mathcal{N}\!\left(
\mathbf{y}_t \mid g_{\theta}(\mathbf{z}_t), \sigma_y^2 \mathbf{I}
\right).
\end{equation}
Under this model,
\begin{equation}
\log p_{\theta}(\mathbf{y}_t\mid\mathbf{z}_t)
=
-\frac{m}{2}\log(2\pi\sigma_y^2)
-
\frac{1}{2\sigma_y^2}
\|\mathbf{y}_t-g_\theta(\mathbf{z}_t)\|_2^2.
\end{equation}
Hence a mean-squared reconstruction penalty can be viewed as a Gaussian negative log-likelihood up to additive and multiplicative constants.

The marginal log-likelihood of the observations is
\begin{equation}
\log p_{\theta,\psi}(\mathbf{Y})
=
\log \int p_{\theta,\psi}(\mathbf{Y},\mathbf{Z})\,d\mathbf{Z}.
\end{equation}
Introducing the variational posterior $q_\phi(\mathbf{Z}\mid\mathbf{Y})$ and applying Jensen's inequality yields the evidence lower bound (ELBO)
\begin{equation}
\mathcal{J}_{\mathrm{ELBO}}
=
\mathbb{E}_{q_\phi(\mathbf{Z}\mid\mathbf{Y})}
\left[
\log p_{\theta}(\mathbf{Y}\mid\mathbf{Z})
+
\log p_{\psi}(\mathbf{Z})
-
\log q_\phi(\mathbf{Z}\mid\mathbf{Y})
\right].
\end{equation}
Using the factorization over samples and latent dimensions, we obtain
\begin{equation}
\mathcal{J}_{\mathrm{ELBO}}
=
\sum_{t=1}^{T}
\mathbb{E}_{q_\phi(\mathbf{z}_t\mid\mathbf{y}_t)}
\left[
\log p_\theta(\mathbf{y}_t\mid\mathbf{z}_t)
\right]
-
\sum_{t=1}^{T}\sum_{j=1}^{n}
\mathbb{E}_{q_\phi(z_{t,j}\mid\mathbf{y}_t)}
\left[
\log q_\phi(z_{t,j}\mid\mathbf{y}_t)
-
\log p_{\psi_j}(z_{t,j})
\right].
\end{equation}

For the adopted Gaussian posterior,
\begin{equation}
\log q_\phi(\mathbf{Z}\mid\mathbf{Y})
=
\sum_{t=1}^{T}\sum_{j=1}^{n}
\log \mathcal{N}(z_{t,j}\mid \mu_{t,j}, \sigma_j^2),
\end{equation}
and
\begin{equation}
\log \mathcal{N}(z_{t,j}\mid \mu_{t,j}, \sigma_j^2)
=
-\frac{1}{2}\log(2\pi)
-\frac{1}{2}\log \sigma_j^2
-\frac{(z_{t,j}-\mu_{t,j})^2}{2\sigma_j^2}.
\end{equation}

The prior term
\begin{equation}
\mathbb{E}_{q_\phi(z_{t,j}\mid\mathbf{y}_t)}
\left[
\log p_{\psi_j}(z_{t,j})
\right]
=
\mathbb{E}_{q_\phi(z_{t,j}\mid\mathbf{y}_t)}
\left[
\log
\sum_{k=1}^{K}
\pi_{j,k}
\mathcal{N}\!\left(
z_{t,j}\mid \mu_{j,k}^{(p)}, (\sigma_{j,k}^{(p)})^2
\right)
\right]
\end{equation}
does not admit a simple closed form in general because of the logarithm of the mixture sum.

In implementation, the loss is evaluated by using one reparameterized latent sample
\begin{equation}
\tilde z_{t,j}
=
\mu_{t,j}+\sigma_j\epsilon_{t,j},
\qquad
\epsilon_{t,j}\sim\mathcal N(0,1),
\end{equation}
or equivalently $\widetilde{\mathbf Z}=[\tilde{\mathbf z}_1,\dots,\tilde{\mathbf z}_T]^\top$.
The reconstruction term is the mean squared error
\begin{equation}
\mathcal{L}_{\mathrm{rec}}
=
\frac{1}{Tm}
\sum_{t=1}^{T}
\|\hat{\mathbf y}_t-\mathbf y_t\|_2^2,
\qquad
\hat{\mathbf y}_t=g_\theta(\tilde{\mathbf z}_t),
\end{equation}
and the latent regularization term is computed as
\begin{equation}
\mathcal{L}_{\mathrm{KL}}
=
\beta\cdot
\frac{
\log q_\phi(\widetilde{\mathbf Z}\mid\mathbf Y)
-
\log p_\psi(\widetilde{\mathbf Z})
}{Tm}.
\end{equation}
Therefore, the implemented training objective is
\begin{equation}
\mathcal{L}
=
\mathcal{L}_{\mathrm{rec}}+\mathcal{L}_{\mathrm{KL}}
=
\frac{1}{Tm}
\sum_{t=1}^{T}
\|\hat{\mathbf y}_t-\mathbf y_t\|_2^2
+
\beta\cdot
\frac{
\log q_\phi(\widetilde{\mathbf Z}\mid\mathbf Y)
-
\log p_\psi(\widetilde{\mathbf Z})
}{Tm}.
\end{equation}

Expanding the two log-density terms gives
\begin{equation}
\log q_\phi(\widetilde{\mathbf Z}\mid\mathbf Y)
=
\sum_{t=1}^{T}\sum_{j=1}^{n}
\log \mathcal N(\tilde z_{t,j}\mid \mu_{t,j},\sigma_j^2),
\end{equation}
and
\begin{equation}
\log p_\psi(\widetilde{\mathbf Z})
=
\sum_{t=1}^{T}\sum_{j=1}^{n}
\log p_{\psi_j}(\tilde z_{t,j}).
\end{equation}
Hence the implemented objective can be written explicitly as
\begin{equation}
\mathcal{L}
=
\frac{1}{Tm}
\sum_{t=1}^{T}
\|\hat{\mathbf y}_t-\mathbf y_t\|_2^2
+
\beta\cdot
\frac{
\sum_{t=1}^{T}\sum_{j=1}^{n}
\log \mathcal N(\tilde z_{t,j}\mid \mu_{t,j},\sigma_j^2)
-
\sum_{t=1}^{T}\sum_{j=1}^{n}
\log p_{\psi_j}(\tilde z_{t,j})
}{Tm}.
\end{equation}

The ELBO above provides the variational interpretation of the model. In practice, the training loss is implemented as a normalized single-sample objective with a weighting coefficient $\beta$ that balances reconstruction fidelity and latent regularization.

\subsection{Symmetry reduction under homogeneous and heterogeneous priors}

A key theoretical point of the proposed design is that a per-dimension heterogeneous prior does more than increase flexibility: it also weakens latent-space symmetries that are present under homogeneous priors. The following analysis uses the assumption that the encoder and decoder model classes are closed under latent coordinate permutations. This is a natural idealized assumption in the present context: if the encoder and decoder are implemented by sufficiently expressive multilayer perceptrons, then composing them with a permutation matrix in latent space does not change the representational nature of the model class, but only corresponds to a relabeling of latent coordinates. Accordingly, the assumption is not meant as a restrictive architectural constraint, but as a formal way to express that the network class is rich enough to absorb latent permutations.

\begin{theorem}[Permutation symmetry under homogeneous priors]
Suppose the latent prior takes the homogeneous factorized form
\begin{equation}
p_0(\mathbf{z}_t)
=
\prod_{j=1}^{n}\rho(z_{t,j}),
\end{equation}
where all latent dimensions share the same one-dimensional density $\rho$. Let $\Pi$ be any $n\times n$ permutation matrix. Assume that the encoder and decoder model classes are closed under latent coordinate permutations, in the sense that for every encoder-decoder pair $(f_\phi,g_\theta)$ there exists a transformed pair $(f_{\phi^\Pi},g_{\theta^\Pi})$ satisfying
\begin{equation}
f_{\phi^\Pi}(\mathbf{y})
=
\Pi f_\phi(\mathbf{y}),
\qquad
g_{\theta^\Pi}(\mathbf{z})
=
g_\theta(\Pi^\top \mathbf{z}).
\end{equation}
Then the ELBO is invariant under the induced latent-coordinate permutation.
\end{theorem}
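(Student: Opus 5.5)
The plan is to show directly that each of the three ingredients of the ELBO — reconstruction, log-prior, and log-posterior — is unchanged when $(f_\phi,g_\theta)$ is replaced by $(f_{\phi^\Pi},g_{\theta^\Pi})$. The variational posterior's per-dimension variances must also be permuted, $\boldsymbol{\sigma}^\Pi=\Pi\boldsymbol{\sigma}$; this is part of what "induced latent-coordinate permutation" means, since $\sigma_j$ is a per-dimension parameter. The transformed posterior is then
\[
q^\Pi(\mathbf z_t\mid\mathbf y_t)=\mathcal N\bigl(\mathbf z_t\mid \Pi\boldsymbol\mu_t,\ \Pi\,\mathrm{diag}(\boldsymbol\sigma^2)\Pi^\top\bigr).
\]
In other words, $q^\Pi(\cdot\mid\mathbf y_t)$ is the pushforward of $q_\phi(\cdot\mid\mathbf y_t)$ under $\mathbf z\mapsto\Pi\mathbf z$, i.e. $q^\Pi(\mathbf z\mid\mathbf y)=q_\phi(\Pi^\top\mathbf z\mid\mathbf y)$ because $|\det\Pi|=1$. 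In reparameterized form this reads $\mathbf z^\Pi_t=\Pi\boldsymbol\mu_t+(\Pi\boldsymbol\sigma)\odot\boldsymbol\epsilon_t=\Pi(\boldsymbol\mu_t+\boldsymbol\sigma\odot\Pi^\top\boldsymbol\epsilon_t)$, and $\Pi^\top\boldsymbol\epsilon_t\overset{d}{=}\boldsymbol\epsilon_t$.

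With this in hand, I would write the per-sample ELBO term as $\mathbb E_{q^\Pi}[\log p_{\theta^\Pi}(\mathbf y_t\mid\mathbf z)+\log p_0(\mathbf z)-\log q^\Pi(\mathbf z\mid\mathbf y_t)]$ and change variables $\mathbf z=\Pi\mathbf w$ with $\mathbf w\sim q_\phi(\cdot\mid\mathbf y_t)$. Each term then reduces as follows:
\begin{itemize}
\item Reconstruction: $g_{\theta^\Pi}(\Pi\mathbf w)=g_\theta(\Pi^\top\Pi\mathbf w)=g_\theta(\mathbf w)$, so the Gaussian likelihood term is unchanged.
\item Prior: $p_0(\Pi\mathbf w)=\prod_j\rho(w_{\pi(j)})=\prod_j\rho(w_j)$, since a product of identical factors is invariant under reindexing. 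This is the only place the homogeneity hypothesis enters.
\item Posterior: $\log q^\Pi(\Pi\mathbf w\mid\mathbf y_t)=\log q_\phi(\mathbf w\mid\mathbf y_t)$ by the pushforward identity above; equivalently, the entropy of a diagonal Gaussian depends only on the multiset $\{\sigma_j\}$.
\end{itemize}
Summing over $t$ gives equality of the two ELBOs. The same argument applies verbatim to the implemented single-sample loss $\mathcal L$, in distribution over $\boldsymbol\epsilon$ or pathwise after relabeling $\boldsymbol\epsilon\mapsto\Pi\boldsymbol\epsilon$, and the constant $\beta/(Tm)$ is untouched.

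The main obstacle is not computational. It is making the "induced permutation" precise enough to cover the posterior variance parameters, which the stated closure assumption on $(f_\phi,g_\theta)$ does not mention explicitly. I would handle this by stating at the outset that the transformation acts on the full parameter tuple $(\phi,\theta,\boldsymbol\sigma)$, with $\boldsymbol\sigma$ permuted along with the encoder output. I would also remark that without this the invariance can fail, because $\sigma_j$ would then be mismatched with its coordinate. A short closing remark would note where the argument breaks for the heterogeneous per-dimension GMM prior: $\prod_j p_{\psi_j}(w_{\pi(j)})\neq\prod_j p_{\psi_j}(w_j)$ in general. This sets up the subsequent symmetry-reduction claim.
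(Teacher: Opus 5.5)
Your proposal is correct and follows essentially the paper's argument. You define the transformed posterior as the pushforward $q_\phi(\Pi^\top\mathbf z\mid\mathbf y)$, change variables using $|\det\Pi|=1$, and invoke $p_0(\Pi\mathbf u)=p_0(\mathbf u)$; the paper bundles your prior and posterior terms into a single KL term, which is immaterial. Your explicit requirement $\boldsymbol\sigma^\Pi=\Pi\boldsymbol\sigma$ makes precise what the paper's definition of $q_{\phi^\Pi}$ assumes silently, since the stated closure hypothesis covers only the encoder mean.
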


\noindent\textit{Proof.}
Define the transformed posterior by
\begin{equation}
q_{\phi^\Pi}(\mathbf{z}\mid\mathbf{y})
=
q_\phi(\Pi^\top \mathbf{z}\mid\mathbf{y}).
\end{equation}
For the reconstruction term,
\begin{align}
\mathbb{E}_{q_{\phi^\Pi}(\mathbf{z}\mid\mathbf{y})}
\left[
\log p_{\theta^\Pi}(\mathbf{y}\mid\mathbf{z})
\right]
&=
\int
q_\phi(\Pi^\top\mathbf{z}\mid\mathbf{y})
\log p_\theta(\mathbf{y}\mid \Pi^\top\mathbf{z})
\,d\mathbf{z}.
\end{align}
Using the change of variable $\mathbf{u}=\Pi^\top\mathbf{z}$ and $|\det \Pi|=1$, this becomes
\begin{equation}
\mathbb{E}_{q_{\phi}(\mathbf{u}\mid\mathbf{y})}
\left[
\log p_{\theta}(\mathbf{y}\mid\mathbf{u})
\right].
\end{equation}

For the KL term,
\begin{align}
\mathrm{KL}\!\left(
q_{\phi^\Pi}(\mathbf{z}\mid\mathbf{y})\,\|\,p_0(\mathbf{z})
\right)
&=
\int
q_\phi(\Pi^\top\mathbf{z}\mid\mathbf{y})
\log
\frac{q_\phi(\Pi^\top\mathbf{z}\mid\mathbf{y})}{p_0(\mathbf{z})}
\,d\mathbf{z}.
\end{align}
Again setting $\mathbf{u}=\Pi^\top\mathbf{z}$ gives
\begin{equation}
\int
q_\phi(\mathbf{u}\mid\mathbf{y})
\log
\frac{q_\phi(\mathbf{u}\mid\mathbf{y})}{p_0(\Pi\mathbf{u})}
\,d\mathbf{u}.
\end{equation}
Under the homogeneous prior,
\begin{equation}
p_0(\Pi\mathbf{u})
=
\prod_{j=1}^{n}\rho((\Pi\mathbf{u})_j)
=
\prod_{j=1}^{n}\rho(u_j)
=
p_0(\mathbf{u}),
\end{equation}
so the KL term is unchanged. Therefore the ELBO is invariant under the permutation. \hfill$\square$

The theorem implies that when all latent dimensions share the same prior family, the latent labels themselves carry no intrinsic mathematical identity. Consequently, optimization may admit many equivalent solutions related by latent-coordinate permutations.

\begin{theorem}[Residual symmetry under heterogeneous per-dimension priors]
Let the prior be
\begin{equation}
p_{\psi}(\mathbf{z}_t)
=
\prod_{j=1}^{n} p_{\psi_j}(z_{t,j}),
\end{equation}
where $p_{\psi_j}$ denotes the GMM prior associated with the $j$-th latent dimension. Define the stabilizer subgroup
\begin{equation}
G_\psi
=
\left\{
\Pi\in S_n:
p_{\psi_j}=p_{\psi_{\Pi(j)}} \text{ a.e. for all } j
\right\}.
\end{equation}
Under the same encoder/decoder closure assumptions as in Theorem 1, the ELBO remains invariant only under permutations $\Pi\in G_\psi$. In particular, if the per-dimension prior densities are pairwise distinct, the only remaining permutation symmetry is the identity.
\end{theorem}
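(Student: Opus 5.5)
The plan has two directions: sufficiency (every $\Pi\in G_\psi$ preserves the ELBO) and necessity (no $\Pi\notin G_\psi$ can be a symmetry of the prior term).

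\textbf{Sufficiency.} I would reuse the change-of-variables computation in the proof of Theorem 1 verbatim. The reconstruction term never involved the prior, so it is invariant for every permutation. In the KL term, after substituting $\mathbf{u}=\Pi^\top\mathbf{z}$, the prior appears as $p_\psi(\Pi\mathbf{u})$. Writing $\Pi$ through its permutation $\pi$ via $(\Pi\mathbf{u})_j=u_{\pi^{-1}(j)}$, I get
$$p_\psi(\Pi\mathbf{u})=\prod_{j}p_{\psi_j}(u_{\pi^{-1}(j)})=\prod_i p_{\psi_{\pi(i)}}(u_i).$$
If $\Pi\in G_\psi$, each factor satisfies $p_{\psi_{\pi(i)}}=p_{\psi_i}$ almost everywhere. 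Hence $p_\psi(\Pi\mathbf{u})=p_\psi(\mathbf{u})$ for almost every $\mathbf{u}$, which is enough inside the integral, and the KL and therefore the ELBO are unchanged. I would note that $G_\psi$ is closed under inverses, so the index convention does not matter.

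\textbf{Necessity.} Here I must make precise what "only" means, and this is the main obstacle. A particular encoder could accidentally give equal ELBO values for $\Pi\notin G_\psi$, for example when $q_\phi$ is itself permutation-symmetric. So I would interpret the claim as follows: invariance for \emph{all} encoder--decoder pairs in the model class forces $\Pi\in G_\psi$.

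Suppose the ELBO is invariant for every encoder. Since the reconstruction terms always agree, the difference between the permuted and unpermuted ELBOs reduces to
$$\mathbb{E}_{q_\phi}\bigl[\log p_\psi(\Pi\mathbf{u})-\log p_\psi(\mathbf{u})\bigr]=0 \quad\text{for every admissible } q_\phi.$$
Admissible posteriors are Gaussians with arbitrary means $\boldsymbol{\mu}$ (free via the encoder) and arbitrary variances $\sigma_j^2$. Let $h(\mathbf{u})=\log p_\psi(\Pi\mathbf{u})-\log p_\psi(\mathbf{u})$; this is smooth because GMM densities are positive. The condition then says that the Gaussian convolution of $h$ vanishes for every mean and every variance. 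Letting the variances tend to zero, continuity of $h$ gives $h\equiv0$, i.e.\ $\prod_i p_{\psi_{\pi(i)}}(u_i)=\prod_i p_{\psi_i}(u_i)$ for all $\mathbf{u}$. Integrating out all coordinates except $u_i$, and using that every factor is a probability density, gives $p_{\psi_{\pi(i)}}=p_{\psi_i}$ for each $i$. Hence $\Pi\in G_\psi$.

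\textbf{Corollary.} If the per-dimension densities are pairwise distinct, then $p_{\psi_{\pi(i)}}=p_{\psi_i}$ forces $\pi(i)=i$ for every $i$, so $G_\psi=\{I\}$. If one prefers a pointwise-in-$\phi$ statement, one can instead say the invariance fails for some encoder in the class whenever $\Pi\notin G_\psi$, which is exactly what the argument above establishes.
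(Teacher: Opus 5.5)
Your sufficiency direction matches the paper's: once $\Pi\in G_\psi$ you have $p_\psi(\Pi\mathbf u)=p_\psi(\mathbf u)$, and the change of variables from Theorem 1 goes through unchanged. Necessity is where you depart from the paper, and your version is the correct one. The paper asserts that invariance of the ELBO ``in particular'' forces $p_\psi(\Pi\mathbf z)=p_\psi(\mathbf z)$ for almost every $\mathbf z$. It then fixes all coordinates but one, which by itself only gives $p_{\psi_{\Pi(j)}}\propto p_{\psi_j}$, so normalization is still needed. As you note, the first implication fails for a single fixed encoder. The ELBO difference is only $\mathbb E_{q_\phi}[\log p_\psi(\Pi\mathbf u)-\log p_\psi(\mathbf u)]$. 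For $n=2$ and the swap, this vanishes whenever $\mu_{t,1}=\mu_{t,2}$ and $\sigma_1=\sigma_2$, even if $p_{\psi_1}\neq p_{\psi_2}$. Your two steps are exactly what the paper's proof lacks: read the theorem as invariance over the whole model class, then pass from vanishing Gaussian averages over all means and variances to pointwise vanishing by sending the variances to zero. Marginalizing instead of fixing coordinates also handles the normalization cleanly. The cost is three assumptions you should state explicitly. First, the encoder class must realize arbitrary means, e.g.\ constant (bias-only) encoders, so that the sum over $t$ in the dataset ELBO reduces to $T$ times one Gaussian expectation. Second, the shared variances $\sigma_j^2$ must be free to shrink. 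Third, the limit needs a growth bound on $h$. For that bound, write $v_{j,k}=(\sigma^{(p)}_{j,k})^2$. For any fixed $k$, $\log p_{\psi_j}(z)\ge\log\pi_{j,k}+\log\mathcal N(z\mid\mu^{(p)}_{j,k},v_{j,k})$, and also $\log p_{\psi_j}(z)\le\max_k\bigl(-\tfrac12\log(2\pi v_{j,k})\bigr)$. Hence $|h(\mathbf u)|\le C(1+\|\mathbf u\|^2)$, and dominated convergence gives $\mathbb E_{q}[h]\to h(\boldsymbol\mu)$ as $\max_j\sigma_j\to 0$. The paper's argument is shorter but leaves its key step unjustified; yours proves a correctly quantified version of the statement.
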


\noindent\textit{Proof.}
If $\Pi\in G_\psi$, then by definition
\begin{equation}
p_{\psi_j}=p_{\psi_{\Pi(j)}} \quad \text{a.e. for all } j,
\end{equation}
and therefore
\begin{equation}
p_\psi(\Pi\mathbf{z})
=
\prod_{j=1}^{n} p_{\psi_j}((\Pi\mathbf{z})_j)
=
\prod_{j=1}^{n} p_{\psi_{\Pi(j)}}(z_{\Pi(j)})
=
p_\psi(\mathbf{z}).
\end{equation}
Hence the same change-of-variable argument as in Theorem 1 shows that the ELBO is invariant.

Conversely, if the ELBO is invariant under $\Pi$, then in particular the prior contribution must be invariant, which requires
\begin{equation}
p_\psi(\Pi\mathbf{z})=p_\psi(\mathbf{z})
\quad \text{for almost every } \mathbf{z}.
\end{equation}
Fixing all coordinates except one shows that this is only possible if
\begin{equation}
p_{\psi_j}=p_{\psi_{\Pi(j)}} \quad \text{a.e. for all } j.
\end{equation}
Thus $\Pi\in G_\psi$. This proves the claim. \hfill$\square$

Theorem 2 formalizes a central point of PDGMM-VAE: heterogeneous priors reduce the permutation freedom that remains under homogeneous latent regularization. Therefore, source-wise specialization is not only an empirical phenomenon but also a consequence of reduced objective symmetry.

\subsection{KL-induced attraction and source-wise specialization}

The KL term does not merely penalize latent-prior mismatch abstractly. In the present model, it induces a concrete local geometry in latent space. Because each latent dimension is regularized by its own adaptive one-dimensional GMM prior, the resulting KL contribution creates source-specific attraction fields. This subsection makes that mechanism more explicit.

\begin{proposition}[Upper-envelope bound for the GMM KL term]
\label{prop:gmm_upper_envelope}
Fix one latent dimension $j$, and write
\begin{equation}
q(z)=\mathcal{N}(z\mid m,s^2),
\qquad
p_j(z)=\sum_{k=1}^{K}\pi_k\mathcal{N}(z\mid \mu_k,v_k).
\end{equation}
Then for every component $k$,
\begin{equation}
\mathrm{KL}(q\|p_j)
\le
\mathrm{KL}\!\left(
q \,\middle\|\, \mathcal{N}(\mu_k,v_k)
\right)
-
\log \pi_k.
\end{equation}
\end{proposition}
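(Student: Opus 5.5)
The plan is to compare the mixture density pointwise with a single weighted component. Because every mixture weight and every Gaussian density is nonnegative, dropping all terms except the $k$-th gives, for every $z\in\mathbb R$,
\begin{equation}
p_j(z)\;\ge\;\pi_k\,\mathcal N(z\mid\mu_k,v_k).
\end{equation}
If $\pi_k=0$ the right-hand side of the claim is $+\infty$ and there is nothing to prove, so we assume $\pi_k>0$. Taking logarithms and negating then gives the pointwise inequality
\begin{equation}
-\log p_j(z)\;\le\;-\log\mathcal N(z\mid\mu_k,v_k)-\log\pi_k .
\end{equation}

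Next we write the KL divergence as $\mathrm{KL}(q\|p_j)=\mathbb E_q[\log q(z)]-\mathbb E_q[\log p_j(z)]$. We add $\mathbb E_q[\log q]$ to both sides of the pointwise bound and integrate against $q$, using monotonicity of the integral. The first resulting term on the right is $\mathbb E_q[\log q-\log\mathcal N(z\mid\mu_k,v_k)]=\mathrm{KL}(q\|\mathcal N(\mu_k,v_k))$. The second term is the constant $-\log\pi_k$, since $q$ integrates to one. Together these give exactly the stated bound. Because $k$ was arbitrary, the bound holds for every component, and minimising over $k$ gives the upper envelope $\min_k\{\mathrm{KL}(q\|\mathcal N(\mu_k,v_k))-\log\pi_k\}$.

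The only step needing care is integrability, so that splitting the expectation is legitimate. Both $q$ and $\mathcal N(\mu_k,v_k)$ are Gaussian, so $\log q$ and $\log\mathcal N(\cdot\mid\mu_k,v_k)$ are quadratics in $z$ and hence $q$-integrable. For $\log p_j$, the upper bound comes from $p_j\le\max_k(2\pi v_k)^{-1/2}$, and the lower bound comes from the pointwise inequality above, so $\log p_j$ is sandwiched between a constant and a quadratic and is therefore $q$-integrable as well. All expectations are thus finite, and the argument is complete. If desired, we would also record the closed form of the Gaussian–Gaussian KL term, $\tfrac12\bigl[\log(v_k/s^2)+(s^2+(m-\mu_k)^2)/v_k-1\bigr]$. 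This shows that the bound acts as a quadratic attraction of $m$ toward each $\mu_k$, with an offset $-\log\pi_k$ that penalises low-weight components.
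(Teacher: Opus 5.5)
Your proof is correct and follows the paper's argument exactly: the pointwise bound $p_j(z)\ge\pi_k\mathcal{N}(z\mid\mu_k,v_k)$, then taking logarithms, then integrating against $q$ after adding $\mathbb{E}_q[\log q]$. Your remarks on the case $\pi_k=0$ and on the $q$-integrability of $\log p_j$ (bounded above by a constant and below by a quadratic) are valid details that the paper leaves implicit.
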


\noindent\textit{Proof.}
For every $z$ and every $k$,
\begin{equation}
p_j(z)
=
\sum_{\ell=1}^{K}\pi_\ell \mathcal{N}(z\mid \mu_\ell,v_\ell)
\ge
\pi_k \mathcal{N}(z\mid \mu_k,v_k).
\end{equation}
Taking logarithms gives
\begin{equation}
-\log p_j(z)
\le
-\log \pi_k
-
\log \mathcal{N}(z\mid \mu_k,v_k).
\end{equation}
Taking expectation with respect to $q$ and adding $\mathbb E_q[\log q(z)]$ on both sides yields
\begin{equation}
\mathrm{KL}(q\|p_j)
\le
\mathrm{KL}\!\left(
q \,\middle\|\, \mathcal{N}(\mu_k,v_k)
\right)
-
\log \pi_k.
\end{equation}
This proves the claim. \hfill$\square$

Proposition~\ref{prop:gmm_upper_envelope} shows that each mixture component induces a Gaussian envelope for the one-dimensional KL term. However, the specialization mechanism can be stated more sharply: in a dominant-component regime, the KL term is not only upper-bounded by a single-component surrogate, but is locally equal to such a surrogate up to an explicit correction term.

\begin{proposition}[Exact local decomposition under a designated dominant component]
\label{prop:gmm_exact_local_decomposition}
Fix one latent dimension $j$, and let
\begin{equation}
q(z)=\mathcal{N}(z\mid m,s^2),
\qquad
p_j(z)=\sum_{k=1}^{K}\pi_k\mathcal{N}(z\mid \mu_k,v_k).
\end{equation}
For any designated component $k^\star$, define
\begin{equation}
\rho_{k^\star}(z)
:=
\frac{
\sum_{\ell\neq k^\star}\pi_\ell \mathcal{N}(z\mid \mu_\ell,v_\ell)
}{
\pi_{k^\star}\mathcal{N}(z\mid \mu_{k^\star},v_{k^\star})
}.
\end{equation}
Then the KL term admits the exact identity
\begin{equation}
\mathrm{KL}(q\|p_j)
=
\mathrm{KL}\!\left(
q \,\middle\|\, \mathcal{N}(\mu_{k^\star},v_{k^\star})
\right)
-
\log \pi_{k^\star}
-
\mathbb E_q\!\left[\log\!\bigl(1+\rho_{k^\star}(z)\bigr)\right].
\end{equation}
Consequently,
\begin{equation}
0
\le
\mathrm{KL}\!\left(
q \,\middle\|\, \mathcal{N}(\mu_{k^\star},v_{k^\star})
\right)
-
\log \pi_{k^\star}
-
\mathrm{KL}(q\|p_j)
=
\mathbb E_q\!\left[\log\!\bigl(1+\rho_{k^\star}(z)\bigr)\right].
\end{equation}

Moreover, if there exists a measurable set $A\subset\mathbb R$ such that $q(A)\ge 1-\delta$ and
\begin{equation}
\rho_{k^\star}(z)\le \eta,
\qquad z\in A,
\end{equation}
and if
\begin{equation}
M_{A^c}
:=
\sup_{z\notin A}\log\!\bigl(1+\rho_{k^\star}(z)\bigr)
<
+\infty,
\end{equation}
then
\begin{equation}
0
\le
\mathrm{KL}\!\left(
q \,\middle\|\, \mathcal{N}(\mu_{k^\star},v_{k^\star})
\right)
-
\log \pi_{k^\star}
-
\mathrm{KL}(q\|p_j)
\le
(1-\delta)\log(1+\eta)+\delta M_{A^c}.
\end{equation}
\end{proposition}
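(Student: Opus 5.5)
The plan is to factor the mixture density pointwise around the designated component and then take expectations under $q$. For every $z\in\mathbb R$ we have the exact algebraic identity
\begin{equation}
p_j(z)=\pi_{k^\star}\mathcal N(z\mid\mu_{k^\star},v_{k^\star})\bigl(1+\rho_{k^\star}(z)\bigr),
\end{equation}
which is valid because the Gaussian component is strictly positive everywhere, so $\rho_{k^\star}$ is well defined, finite and nonnegative. Taking logarithms gives
\begin{equation}
-\log p_j(z)=-\log\pi_{k^\star}-\log\mathcal N(z\mid\mu_{k^\star},v_{k^\star})-\log\bigl(1+\rho_{k^\star}(z)\bigr).
\end{equation}
Next take $\mathbb E_q$ of both sides and add $\mathbb E_q[\log q]$, exactly as in the proof of Proposition~\ref{prop:gmm_upper_envelope}. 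This yields the claimed identity.

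Before taking expectations, I will check integrability so that the identity is not of the form $\infty-\infty$.
\begin{itemize}
\item The terms $\mathbb E_q[\log q]$ and $\mathbb E_q[\log\mathcal N(\cdot\mid\mu_{k^\star},v_{k^\star})]$ are finite, since they are Gaussian moments of quadratics.
\item For the correction term, use $0\le\log(1+\rho_{k^\star})\le -\log\pi_{k^\star}-\log\mathcal N(z\mid\mu_{k^\star},v_{k^\star})+\log p_j(z)$.
\item Then use $\log p_j(z)\le\log\max_\ell (2\pi v_\ell)^{-1/2}$.
\end{itemize}
Together these bound the correction above by a quadratic in $z$, so it has finite $q$-expectation. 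The nonnegativity statement, the "Consequently" part, follows immediately from $\rho_{k^\star}\ge0$, and it recovers Proposition~\ref{prop:gmm_upper_envelope} as a corollary.

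For the quantitative bound, split the expectation over $A$ and $A^c$:
\begin{equation}
\mathbb E_q[\log(1+\rho_{k^\star})]=\int_A q\log(1+\rho_{k^\star})+\int_{A^c}q\log(1+\rho_{k^\star}).
\end{equation}
On $A$, monotonicity of $\log(1+\cdot)$ gives the bound $q(A)\log(1+\eta)$. On $A^c$, the integrand is at most $M_{A^c}$, giving $q(A^c)M_{A^c}\le\delta M_{A^c}$.

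The main obstacle is the first piece. We need $q(A)\log(1+\eta)\le(1-\delta)\log(1+\eta)$, but the hypothesis gives only $q(A)\ge1-\delta$, which points the wrong way, since $q(A)$ may be as large as $1$. There are two ways to handle this.
\begin{itemize}
\item The clean route is to bound by $q(A)\log(1+\eta)+q(A^c)M_{A^c}$. Writing $q(A)=1-\delta'$ with $\delta'\le\delta$, this equals $\log(1+\eta)+\delta'(M_{A^c}-\log(1+\eta))$.
\item When $M_{A^c}\ge\log(1+\eta)$, this is at most $\log(1+\eta)+\delta M_{A^c}$. That is the bound I can actually prove, and it differs from the stated one only by $\delta\log(1+\eta)$.
\item To obtain literally $(1-\delta)\log(1+\eta)+\delta M_{A^c}$, I would read the hypothesis as $q(A)=1-\delta$, or equivalently define $\delta:=q(A^c)$. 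Under that reading the split gives the stated bound exactly, since $q(A)\log(1+\eta)+q(A^c)M_{A^c}=(1-\delta)\log(1+\eta)+\delta M_{A^c}$.
\end{itemize}
In the write-up I would adopt the convention $\delta=q(A^c)$, and remark that with only the inequality $q(A)\ge1-\delta$ the right-hand side should be replaced by $\log(1+\eta)+\delta M_{A^c}$.
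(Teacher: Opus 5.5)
Your derivation of the identity and the nonnegativity follows the paper's proof: factor $p_j=\pi_{k^\star}\mathcal N(\cdot\mid\mu_{k^\star},v_{k^\star})(1+\rho_{k^\star})$ pointwise, take logarithms, and integrate against $q$. Your integrability check is a genuine addition that the paper omits (it also silently uses $\pi_{k^\star}>0$, which the softmax parameterization guarantees). For the quantitative bound, the paper splits over $A$ and $A^c$ exactly as you do. It then asserts $q(A)\log(1+\eta)+q(A^c)M_{A^c}\le(1-\delta)\log(1+\eta)+\delta M_{A^c}$ without justification.

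You are right that this step does not follow from $q(A)\ge 1-\delta$, and the final display of the statement is in fact false as written. Take $K=2$, $\pi_1=\pi_2=\tfrac12$, $\mu_1=\mu_2=0$, $v_1=1$, $v_2=\tfrac14$ and $k^\star=1$, so that $\rho_1(z)=2e^{-3z^2/2}\le 2$. Take $A=[-10,10]$ and $\eta=2$, which gives $M_{A^c}=\log(1+2e^{-150})$, and let $q=\mathcal N(0,s^2)$. For any fixed $\delta\in(0,1)$ the hypotheses hold once $s$ is small. As $s\to 0$ the left side tends to $\log 3$, while the claimed bound stays at $(1-\delta)\log 3+\delta\log(1+2e^{-150})<\log 3$.

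Your case analysis undersells what you can prove, though. Write $\delta'=q(A^c)\le\delta$. The quantity $\log(1+\eta)+\delta'(M_{A^c}-\log(1+\eta))$ is nondecreasing in $\delta'$ exactly when $M_{A^c}\ge\log(1+\eta)$. In that case you obtain the stated bound $(1-\delta)\log(1+\eta)+\delta M_{A^c}$ itself, not merely $\log(1+\eta)+\delta M_{A^c}$. The weaker bound needs no case hypothesis at all, since $M_{A^c}\ge 0$ and $q(A)\le 1$. The stated bound can only fail when $M_{A^c}<\log(1+\eta)$, as in the example above. One correct statement covering both cases is
\[
\mathbb E_q\bigl[\log(1+\rho_{k^\star}(z))\bigr]\le(1-\delta)\log(1+\eta)+\delta\max\{M_{A^c},\log(1+\eta)\}.
\]
For the natural choice $A=\{\rho_{k^\star}\le\eta\}$ with $A^c\neq\emptyset$, one automatically has $M_{A^c}\ge\log(1+\eta)$, so the proposition holds verbatim in that case. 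Either this repair or your convention $\delta:=q(A^c)$ gives a correct result.
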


\noindent\textit{Proof.}
By definition,
\begin{equation}
p_j(z)
=
\pi_{k^\star}\mathcal{N}(z\mid \mu_{k^\star},v_{k^\star})
\left[
1+\rho_{k^\star}(z)
\right].
\end{equation}
Hence
\begin{equation}
\log p_j(z)
=
\log \pi_{k^\star}
+
\log \mathcal{N}(z\mid \mu_{k^\star},v_{k^\star})
+
\log\!\bigl(1+\rho_{k^\star}(z)\bigr).
\end{equation}
Taking expectation with respect to $q$ and subtracting from $\mathbb E_q[\log q(z)]$ gives
\begin{align}
\mathrm{KL}(q\|p_j)
&=
\mathbb E_q[\log q(z)]
-
\mathbb E_q[\log p_j(z)] \\
&=
\mathrm{KL}\!\left(
q \,\middle\|\, \mathcal{N}(\mu_{k^\star},v_{k^\star})
\right)
-
\log \pi_{k^\star}
-
\mathbb E_q\!\left[\log\!\bigl(1+\rho_{k^\star}(z)\bigr)\right].
\end{align}
This proves the exact identity.

For the bound, split the expectation over $A$ and $A^c$:
\begin{equation}
\mathbb E_q\!\left[\log(1+\rho_{k^\star}(z))\right]
=
\mathbb E_q\!\left[\log(1+\rho_{k^\star}(z))\mathbf{1}_{A}\right]
+
\mathbb E_q\!\left[\log(1+\rho_{k^\star}(z))\mathbf{1}_{A^c}\right].
\end{equation}
On $A$, we have $\log(1+\rho_{k^\star}(z))\le \log(1+\eta)$; on $A^c$, it is bounded by $M_{A^c}$. Therefore
\begin{equation}
\mathbb E_q\!\left[\log(1+\rho_{k^\star}(z))\right]
\le
q(A)\log(1+\eta)+q(A^c)M_{A^c}
\le
(1-\delta)\log(1+\eta)+\delta M_{A^c}.
\end{equation}
This proves the claim. \hfill$\square$

Proposition~\ref{prop:gmm_exact_local_decomposition} shows that, when one mixture component dominates over the main support of the Gaussian variational posterior, the one-dimensional KL term is locally almost the same as matching a single Gaussian envelope. Therefore, the mode-seeking or basin-locking behavior of the KL term is not merely a heuristic statement: it is induced directly by the log-sum-exp structure of the Gaussian mixture prior.

The next result turns this local surrogate geometry into an explicit attraction statement for the posterior mean.

\begin{proposition}[Posterior-mean attraction under the reparameterized KL term]
\label{prop:posterior_mean_attraction}
Fix one latent dimension $j$, and write
\begin{equation}
q(z)=\mathcal{N}(z\mid m,s^2),
\qquad
p_j(z)=\sum_{k=1}^{K}\pi_k\mathcal{N}(z\mid \mu_k,v_k).
\end{equation}
Let
\begin{equation}
z=m+s\epsilon,
\qquad
\epsilon\sim\mathcal{N}(0,1),
\end{equation}
and define the component responsibilities
\begin{equation}
r_k(z)
=
\frac{
\pi_k\mathcal{N}(z\mid \mu_k,v_k)
}{
\sum_{\ell=1}^{K}\pi_\ell\mathcal{N}(z\mid \mu_\ell,v_\ell)
}.
\end{equation}
Then
\begin{equation}
\frac{\partial}{\partial m}\mathrm{KL}(q\|p_j)
=
\mathbb E_{\epsilon}
\left[
\sum_{k=1}^{K}
r_k(z)\frac{z-\mu_k}{v_k}
\right].
\end{equation}
If one component $k^\star$ dominates locally, so that $r_{k^\star}(z)\approx 1$ on the main support of $q$, then
\begin{equation}
\frac{\partial}{\partial m}\mathrm{KL}(q\|p_j)
\approx
\frac{m-\mu_{k^\star}}{v_{k^\star}},
\end{equation}
which shows that gradient descent on the KL term pushes the posterior mean toward the dominant component center.
\end{proposition}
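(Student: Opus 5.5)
The plan is to write the KL term through the reparameterization and differentiate under the expectation. With $z=m+s\epsilon$, the two pieces of the KL behave differently. The entropy part $\mathbb E_q[\log q(z)]=-\tfrac12\log(2\pi e s^2)$ does not depend on $m$. Hence
\begin{equation}
\mathrm{KL}(q\|p_j)=-\tfrac12\log(2\pi e s^2)-\mathbb E_\epsilon\!\left[\log p_j(m+s\epsilon)\right].
\end{equation}
The only $m$-dependence therefore sits in the cross-entropy term. Equivalently, one can differentiate the single-sample integrand $\log q(z)-\log p_j(z)$ along the path $z=m+s\epsilon$. Its $\log q$ part is $-\tfrac12\log(2\pi s^2)-\epsilon^2/2$, which is independent of $m$ along the path. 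So the estimator gradient and the exact gradient coincide in expectation.

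Next I differentiate under the integral sign. To justify this I would use dominated convergence. The function $\log p_j$ is smooth, and its derivative is bounded by an affine function of $|z|$, namely $\sum_k |z-\mu_k|/v_k$. That bound is integrable against the Gaussian density of $\epsilon$, uniformly for $m$ in compact sets. The chain rule gives $\partial z/\partial m=1$. I then compute the log-sum-exp derivative:
\begin{equation}
\frac{d}{dz}\log p_j(z)=\frac{\sum_k \pi_k\mathcal N(z\mid\mu_k,v_k)\,\bigl(-(z-\mu_k)/v_k\bigr)}{\sum_\ell \pi_\ell\mathcal N(z\mid\mu_\ell,v_\ell)}=-\sum_k r_k(z)\frac{z-\mu_k}{v_k}.
\end{equation}
Substituting this with a minus sign yields the stated exact formula.

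For the approximate statement, I write $\sum_k r_k(z)(z-\mu_k)/v_k=(z-\mu_{k^\star})/v_{k^\star}+R(z)$. The remainder $R(z)$ is controlled by $(1-r_{k^\star}(z))$ times $\max_k |z-\mu_k|/v_k$, using $\sum_{k\neq k^\star} r_k=1-r_{k^\star}$. Taking expectations, $\mathbb E_\epsilon[(z-\mu_{k^\star})/v_{k^\star}]=(m-\mu_{k^\star})/v_{k^\star}$ exactly, since $\mathbb E[\epsilon]=0$.

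I expect the main obstacle to be making ``$\approx$'' meaningful. I would formalize it in the spirit of Proposition~\ref{prop:gmm_exact_local_decomposition}. Suppose there is a set $A$ with $q(A)\ge1-\delta$ on which $1-r_{k^\star}\le\eta$. I would bound $\mathbb E|R|$ on $A$ by $\eta$ times a Gaussian first moment. On $A^c$ I would use Cauchy--Schwarz, bounding by $\sqrt{\delta}$ times a second moment of the linear growth. This gives an error term of order $\eta+\sqrt\delta$ times a constant depending on $m,s,\mu_k,v_k$.

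The sign interpretation then follows directly. The step $-\partial_m\mathrm{KL}\approx(\mu_{k^\star}-m)/v_{k^\star}$ moves $m$ toward $\mu_{k^\star}$ whenever the error is smaller than $|m-\mu_{k^\star}|/v_{k^\star}$.
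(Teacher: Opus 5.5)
Your proposal is correct and follows the same route as the paper. The entropy term is independent of $m$, the pathwise derivative moves $\partial_m$ inside $\mathbb E_\epsilon$, the log-sum-exp derivative gives the responsibility-weighted scores, and dominance of $k^\star$ together with $\mathbb E[\epsilon]=0$ yields $(m-\mu_{k^\star})/v_{k^\star}$; your dominated-convergence justification and the $O(\eta+\sqrt{\delta})$ remainder bound make rigorous what the paper leaves informal.
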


\noindent\textit{Proof.}
The entropy of a Gaussian depends on $s^2$ but not on $m$, so
\begin{equation}
\frac{\partial}{\partial m}\mathbb E_q[\log q(z)]=0.
\end{equation}
Using the reparameterization $z=m+s\epsilon$ and the pathwise derivative estimator,
\begin{equation}
\frac{\partial}{\partial m}\mathrm{KL}(q\|p_j)
=
-
\frac{\partial}{\partial m}
\mathbb E_{\epsilon}\!\left[\log p_j(m+s\epsilon)\right]
=
-
\mathbb E_{\epsilon}\!\left[
\frac{\partial}{\partial z}\log p_j(z)
\right].
\end{equation}
Now
\begin{align}
\frac{\partial}{\partial z}\log p_j(z)
&=
\frac{
\sum_{k=1}^{K}
\pi_k \mathcal{N}(z\mid\mu_k,v_k)
\left(
-\frac{z-\mu_k}{v_k}
\right)
}{
\sum_{\ell=1}^{K}
\pi_\ell \mathcal{N}(z\mid\mu_\ell,v_\ell)
} \\
&=
-
\sum_{k=1}^{K}
r_k(z)\frac{z-\mu_k}{v_k}.
\end{align}
Substituting this identity yields
\begin{equation}
\frac{\partial}{\partial m}\mathrm{KL}(q\|p_j)
=
\mathbb E_{\epsilon}
\left[
\sum_{k=1}^{K}
r_k(z)\frac{z-\mu_k}{v_k}
\right].
\end{equation}
Under local dominance of component $k^\star$, the responsibilities satisfy $r_{k^\star}(z)\approx 1$ and $r_k(z)\approx 0$ for $k\neq k^\star$, so
\begin{equation}
\frac{\partial}{\partial m}\mathrm{KL}(q\|p_j)
\approx
\mathbb E_{\epsilon}\!\left[\frac{z-\mu_{k^\star}}{v_{k^\star}}\right]
=
\frac{m-\mu_{k^\star}}{v_{k^\star}}.
\end{equation}
This proves the claim. \hfill$\square$

Proposition~\ref{prop:posterior_mean_attraction} makes the specialization mechanism explicit. Once a latent dimension enters the attraction region of a particular mixture component, the KL gradient tends to keep pulling that posterior mean toward the corresponding component center. Since different latent dimensions possess different GMM parameters, they are subject to different attraction fields. This is a mathematical explanation for the source-wise specialization observed in practice.

The posterior variance in the present model requires separate discussion. Unlike the posterior means $\mu_{t,j}$, which are sample-dependent, the variance $\sigma_j^2$ is shared across all samples for a fixed latent dimension. Its dynamics therefore reflect an aggregate effect across samples rather than the behavior of a single local mode.

\begin{lemma}[Exact gradient of the one-dimensional KL term with respect to the shared posterior variance]
\label{lem:shared_variance_gradient}
Fix one latent dimension $j$, and denote
\begin{equation}
\lambda_j:=\sigma_j^2.
\end{equation}
For sample $t$, write
\begin{equation}
q_{t,j}(z)=\mathcal{N}(z\mid \mu_{t,j},\lambda_j),
\qquad
p_j(z)=\sum_{k=1}^{K_j}\pi_{j,k}\mathcal{N}(z\mid m_{j,k},v_{j,k}).
\end{equation}
Define the responsibilities
\begin{equation}
r_{tjk}(z)
=
\frac{
\pi_{j,k}\mathcal{N}(z\mid m_{j,k},v_{j,k})
}{
\sum_{\ell=1}^{K_j}
\pi_{j,\ell}\mathcal{N}(z\mid m_{j,\ell},v_{j,\ell})
},
\end{equation}
and the component scores
\begin{equation}
a_{jk}(z):=\frac{z-m_{j,k}}{v_{j,k}}.
\end{equation}
Then
\begin{equation}
\frac{\partial}{\partial \lambda_j}
\mathrm{KL}(q_{t,j}\|p_j)
=
\frac12
\left[
\mathbb E_{q_{t,j}}
\left(
\sum_{k=1}^{K_j}\frac{r_{tjk}(z)}{v_{j,k}}
\right)
-
\frac{1}{\lambda_j}
-
\mathbb E_{q_{t,j}}
\left(
\mathrm{Var}_{k\sim r_{tj}(z)}[a_{jk}(z)]
\right)
\right].
\end{equation}
Consequently, if one component $k_t^\star$ dominates locally for sample $t$, so that
\begin{equation}
r_{tjk_t^\star}(z)\approx 1
\quad\text{and}\quad
\mathrm{Var}_{k\sim r_{tj}(z)}[a_{jk}(z)]\approx 0,
\end{equation}
then
\begin{equation}
\frac{\partial}{\partial \lambda_j}
\mathrm{KL}(q_{t,j}\|p_j)
\approx
\frac12
\left(
\frac{1}{v_{j,k_t^\star}}
-
\frac{1}{\lambda_j}
\right).
\end{equation}
\end{lemma}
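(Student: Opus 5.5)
We split the KL term into its entropy part and its cross-entropy part, $\mathrm{KL}(q_{t,j}\|p_j)=\mathbb E_{q_{t,j}}[\log q_{t,j}(z)]-\mathbb E_{q_{t,j}}[\log p_j(z)]$, and differentiate each part in $\lambda_j$.

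\textbf{Entropy part.} For a Gaussian, $\mathbb E_{q_{t,j}}[\log q_{t,j}]=-\tfrac12\log(2\pi e\lambda_j)$. Its derivative is therefore $-\tfrac{1}{2\lambda_j}$, which is exactly the $-\tfrac{1}{\lambda_j}$ term inside the bracket.

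\textbf{Cross-entropy part: reduce to a Hessian.} We use the reparameterization $z=\mu_{t,j}+\sqrt{\lambda_j}\,\epsilon$, as in Proposition~\ref{prop:posterior_mean_attraction}, together with Price's (Bonnet's) identity. For a Gaussian and a smooth $h$ of moderate growth,
\begin{equation}
\frac{\partial}{\partial\lambda}\mathbb E_{\mathcal N(\mu,\lambda)}[h(z)]=\frac12\mathbb E_{\mathcal N(\mu,\lambda)}[h''(z)].
\end{equation}
We verify this quickly from the pathwise formula $\tfrac{1}{2\sqrt\lambda}\mathbb E[\epsilon\,h'(\mu+\sqrt\lambda\epsilon)]$ and Stein's lemma $\mathbb E[\epsilon\,g(\epsilon)]=\mathbb E[g'(\epsilon)]$. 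Applied with $h=\log p_j$, this reduces the task to computing the second derivative of the log-mixture.

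\textbf{Computing the Hessian of the log-mixture.} From the proof of Proposition~\ref{prop:posterior_mean_attraction}, $(\log p_j)'(z)=-\sum_k r_{tjk}(z)\,a_{jk}(z)$. Differentiating again requires $a_{jk}'=1/v_{j,k}$ and the derivative of the responsibilities. Since $\log r_{tjk}=\log(\pi_{j,k}\mathcal N_k)-\log p_j$,
\begin{equation}
r_{tjk}'=r_{tjk}\Bigl(-a_{jk}+\sum_\ell r_{tj\ell}a_{j\ell}\Bigr).
\end{equation}
Substituting this in gives
\begin{equation}
(\log p_j)''=-\sum_k \frac{r_{tjk}}{v_{j,k}}+\sum_k r_{tjk}a_{jk}^2-\Bigl(\sum_k r_{tjk}a_{jk}\Bigr)^2=-\sum_k\frac{r_{tjk}}{v_{j,k}}+\mathrm{Var}_{k\sim r_{tj}}[a_{jk}].
\end{equation}
Then $-\tfrac12\mathbb E_q[(\log p_j)'']$, added to the entropy derivative, yields the stated formula.

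\textbf{Technical conditions.} The main care point is justifying differentiation under the integral and the Stein integration by parts. For a Gaussian mixture, $(\log p_j)'$ grows at most linearly and $(\log p_j)''$ is bounded. This holds because the responsibilities lie in $[0,1]$ and the $a_{jk}$ are affine, so the variance term is at most quadratic in $z$. Gaussian moments then give dominated convergence.

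\textbf{Approximate form.} The dominant-component statement follows by substitution. Setting $r_{tjk_t^\star}\approx1$ and dropping the variance term leaves $\tfrac12(1/v_{j,k_t^\star}-1/\lambda_j)$. This mirrors how Proposition~\ref{prop:posterior_mean_attraction} handled the mean.
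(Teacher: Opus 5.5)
Your proposal is correct and follows the paper's proof essentially step for step: the entropy derivative $-1/(2\lambda_j)$, the Gaussian smoothing (Price/Bonnet) identity applied to $\log p_j$, and the second derivative of the log-mixture written as $-\sum_k r_{tjk}/v_{j,k}+\mathrm{Var}_{k\sim r_{tj}}[a_{jk}]$, followed by direct substitution for the dominant-component approximation. Your Stein-lemma derivation of the smoothing identity and your integrability remarks fill in details the paper leaves implicit; note only that your justification (responsibilities in $[0,1]$, affine scores) shows quadratic growth of $(\log p_j)''$ rather than the boundedness you claim, which is still enough for dominated convergence under Gaussian moments.
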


\noindent\textit{Proof.}
We use the Gaussian smoothing identity
\begin{equation}
\frac{\partial}{\partial \lambda_j}
\mathbb E_{q_{t,j}}[f(z)]
=
\frac12
\mathbb E_{q_{t,j}}[f''(z)],
\end{equation}
valid for sufficiently smooth $f$ under a Gaussian with variance $\lambda_j$.

First, since
\begin{equation}
\mathbb E_{q_{t,j}}[\log q_{t,j}(z)]
=
-\frac12\log(2\pi e\lambda_j),
\end{equation}
we have
\begin{equation}
\frac{\partial}{\partial \lambda_j}
\mathbb E_{q_{t,j}}[\log q_{t,j}(z)]
=
-\frac{1}{2\lambda_j}.
\end{equation}

Next, for the mixture prior,
\begin{equation}
\log p_j(z)
=
\log\sum_{k=1}^{K_j}\pi_{j,k}\mathcal{N}(z\mid m_{j,k},v_{j,k}),
\end{equation}
and differentiation gives
\begin{equation}
\frac{\partial}{\partial z}\log p_j(z)
=
-
\sum_{k=1}^{K_j} r_{tjk}(z)\,a_{jk}(z).
\end{equation}
Differentiating once more yields
\begin{equation}
\frac{\partial^2}{\partial z^2}\log p_j(z)
=
-
\sum_{k=1}^{K_j}\frac{r_{tjk}(z)}{v_{j,k}}
+
\mathrm{Var}_{k\sim r_{tj}(z)}[a_{jk}(z)].
\end{equation}
Therefore,
\begin{align}
\frac{\partial}{\partial \lambda_j}
\Bigl(
-\mathbb E_{q_{t,j}}[\log p_j(z)]
\Bigr)
&=
-\frac12
\mathbb E_{q_{t,j}}
\left[
\frac{\partial^2}{\partial z^2}\log p_j(z)
\right] \\
&=
\frac12
\left[
\mathbb E_{q_{t,j}}
\left(
\sum_{k=1}^{K_j}\frac{r_{tjk}(z)}{v_{j,k}}
\right)
-
\mathbb E_{q_{t,j}}
\left(
\mathrm{Var}_{k\sim r_{tj}(z)}[a_{jk}(z)]
\right)
\right].
\end{align}
Combining this with the derivative of $\mathbb E_{q_{t,j}}[\log q_{t,j}(z)]$ gives the claimed identity.

Under local dominance of one component $k_t^\star$, one has
\begin{equation}
\sum_{k=1}^{K_j}\frac{r_{tjk}(z)}{v_{j,k}}
\approx
\frac{1}{v_{j,k_t^\star}},
\qquad
\mathrm{Var}_{k\sim r_{tj}(z)}[a_{jk}(z)]\approx 0,
\end{equation}
so the approximation follows immediately. \hfill$\square$

Lemma~\ref{lem:shared_variance_gradient} clarifies an important distinction between posterior mean and posterior variance. The mean attraction in Proposition~\ref{prop:posterior_mean_attraction} is a sample-level local effect, whereas the shared variance $\sigma_j^2$ is optimized globally across all samples at latent dimension $j$.

Summing the approximate dominant-component expression in Lemma~\ref{lem:shared_variance_gradient} over $t=1,\dots,T$ therefore yields
\begin{equation}
\frac{\partial}{\partial \lambda_j}
\sum_{t=1}^{T}\mathrm{KL}(q_{t,j}\|p_j)
\approx
\frac{T}{2}
\left(
\overline{v_j^{-1}}
-
\frac{1}{\lambda_j}
\right),
\qquad
\overline{v_j^{-1}}
:=
\frac{1}{T}\sum_{t=1}^{T}\frac{1}{v_{j,k_t^\star}}.
\end{equation}
Hence the KL-only local stationary point satisfies
\begin{equation}
\lambda_j^\star
\approx
\left(
\overline{v_j^{-1}}
\right)^{-1}.
\end{equation}
That is, the shared posterior variance is generally driven toward a harmonic-type aggregate of the currently dominant mixture-component variances across samples, rather than toward any single component variance. Only in the special case where one component dominates for almost all samples does this reduce to
\begin{equation}
\lambda_j^\star \approx v_{j,k^\star}.
\end{equation}

The preceding observation extends naturally to the full training objective.

\begin{corollary}[Local variance balance in the full objective]
\label{cor:full_objective_variance_balance}
Write the expected training objective as
\begin{equation}
\mathcal J
=
\mathcal R(\phi,\theta,\boldsymbol{\sigma}^2)
+
\frac{\beta}{Tm}
\sum_{j=1}^{n}\sum_{t=1}^{T}\mathrm{KL}(q_{t,j}\|p_j),
\end{equation}
where $\mathcal R$ denotes the expected reconstruction term. Then
\begin{equation}
\frac{\partial \mathcal J}{\partial \lambda_j}
=
\frac{\partial \mathcal R}{\partial \lambda_j}
+
\frac{\beta}{Tm}
\sum_{t=1}^{T}
\frac{\partial}{\partial \lambda_j}\mathrm{KL}(q_{t,j}\|p_j).
\end{equation}
Under the dominant-component approximation,
\begin{equation}
\frac{\partial \mathcal J}{\partial \lambda_j}
\approx
\frac{\partial \mathcal R}{\partial \lambda_j}
+
\frac{\beta}{2m}
\left(
\overline{v_j^{-1}}
-
\frac{1}{\lambda_j}
\right).
\end{equation}
Thus the local stationary condition becomes
\begin{equation}
\frac{1}{\lambda_j^\star}
\approx
\overline{v_j^{-1}}
+
\frac{2m}{\beta}
\frac{\partial \mathcal R}{\partial \lambda_j}.
\end{equation}
In particular, when the local reconstruction contribution is nonnegative in the variance direction, the reconstruction term further favors a posterior variance smaller than or comparable to the KL-only harmonic-average scale.
\end{corollary}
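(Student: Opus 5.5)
The argument is essentially linearity of differentiation combined with Lemma~\ref{lem:shared_variance_gradient}, so I will proceed in three short steps.

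\textbf{Exact decomposition.} Start from the stated form $\mathcal J=\mathcal R+\frac{\beta}{Tm}\sum_{j'}\sum_t \mathrm{KL}(q_{t,j'}\|p_{j'})$. The shared variance $\lambda_j=\sigma_j^2$ enters the KL part only through the one-dimensional terms $\mathrm{KL}(q_{t,j}\|p_j)$, because $q_{t,j'}$ for $j'\neq j$ depends only on $\mu_{t,j'}$ and $\lambda_{j'}$. Differentiating term by term therefore gives the first identity. The only thing to justify is exchanging the derivative with the finite sum, which is trivial, and assuming the reconstruction expectation $\mathcal R$ is differentiable in $\lambda_j$. The latter holds, for instance, by the reparameterization $z=\mu+\sqrt{\lambda}\,\epsilon$ for a smooth decoder $g_\theta$ with suitable integrability.

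\textbf{Dominant-component approximation.} Substitute the approximation from Lemma~\ref{lem:shared_variance_gradient} for each $t$. Summing over $t$ reproduces the aggregate expression already derived after that lemma, namely $\frac{T}{2}\bigl(\overline{v_j^{-1}}-\lambda_j^{-1}\bigr)$. Multiplying by $\beta/(Tm)$ cancels $T$ and yields the coefficient $\beta/(2m)$. The one point needing care is that the per-sample errors in the lemma's approximation add up over $t$. If each is $O(\varepsilon)$, the sum is $O(T\varepsilon)$, which is again $O(\varepsilon)$ after the $1/T$ normalization. I will state this uniformity over $t$ explicitly as part of what ``dominant-component approximation'' means.

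\textbf{Stationary condition and sign remark.} Set $\partial\mathcal J/\partial\lambda_j\approx 0$ and solve for $1/\lambda_j^\star$, multiplying through by $2m/\beta$; this gives the displayed condition. For the final remark, $\partial\mathcal R/\partial\lambda_j\ge 0$ makes the right-hand side at least $\overline{v_j^{-1}}$, so $\lambda_j^\star\lesssim(\overline{v_j^{-1}})^{-1}$, the KL-only harmonic scale.

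The main obstacle is interpretive rather than computational. The condition is implicit, since $\partial\mathcal R/\partial\lambda_j$ and the dominant components $k_t^\star$ themselves depend on $\lambda_j$. It also requires the right-hand side to be positive for $\lambda_j^\star$ to exist. I would therefore phrase the result as a characterization of stationary points, not as a closed-form solution. To support the sign assumption, I would note that for a smooth decoder $\partial\mathcal R/\partial\lambda_j=\frac12\mathbb E[\partial_{z_j}^2\,\mathrm{rec}]$ by the Gaussian smoothing identity used in Lemma~\ref{lem:shared_variance_gradient}. This quantity is nonnegative whenever the reconstruction loss is locally convex in $z_j$, e.g.\ for a linear decoder, where it equals $\frac{1}{Tm}\sum_t\|g_{\cdot j}\|^2\ge0$.
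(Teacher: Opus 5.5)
Your proposal is correct and follows the paper's own (implicit) argument: differentiate term by term, substitute the per-sample dominant-component approximation of Lemma~\ref{lem:shared_variance_gradient}, sum over $t$ so that $T$ cancels against $\beta/(Tm)$, and solve the stationarity equation for $1/\lambda_j^\star$. Your extra points go beyond what the paper states and are sound: the error control uniform in $t$, the observation that the condition is implicit and needs a positive right-hand side, and the Gaussian-smoothing justification of $\partial\mathcal R/\partial\lambda_j\ge 0$, including the linear-decoder case.
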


The scope of the above results should be stated carefully. These statements analyze the local geometry of the KL-to-prior term
\begin{equation}
\mathrm{KL}\!\left(
q_\phi(z_{t,j}\mid \mathbf y_t)
\,\middle\|\,
p_{\psi_j}(z_{t,j})
\right),
\end{equation}
rather than the full exact posterior gap
\begin{equation}
\mathrm{KL}\!\left(
q_\phi(\mathbf z_t\mid \mathbf y_t)
\,\middle\|\,
p_{\theta,\psi}(\mathbf z_t\mid \mathbf y_t)
\right).
\end{equation}
Accordingly, the present analysis should be understood as a mechanism-level explanation of source-wise specialization induced by the adaptive per-dimension GMM regularizer, not as a full nonlinear identifiability theorem.

\subsection{Relation to the standard VAE}

The proposed model contains the standard VAE as a special case.

\begin{proposition}[Degeneration to the standard VAE]
Suppose that for every latent dimension $j$ we set $K=1$ and
\begin{equation}
\pi_{j,1}=1,
\qquad
\mu_{j,1}^{(p)}=0,
\qquad
(\sigma_{j,1}^{(p)})^2=1.
\end{equation}
Then the per-dimension GMM prior degenerates to
\begin{equation}
p_{\psi}(\mathbf{z}_t)=\mathcal{N}(\mathbf{z}_t\mid \mathbf{0},\mathbf{I}).
\end{equation}
Furthermore, under $\beta=1$, the implemented objective reduces to a normalized single-sample version of the standard VAE training objective. If the source-wise shared posterior variances are further replaced by sample-wise amortized posterior variances, the model becomes the usual amortized VAE.
\end{proposition}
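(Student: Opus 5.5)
The argument is by direct substitution, in three stages matching the three claims. \emph{First}, fix $K=1$ with $\pi_{j,1}=1$, $\mu_{j,1}^{(p)}=0$ and $(\sigma_{j,1}^{(p)})^2=1$. The one-dimensional mixture density then collapses to the single term $\mathcal N(z_{t,j}\mid 0,1)$. Inserting this into the factorized form $p_\psi(\mathbf z_t)=\prod_{j=1}^n p_{\psi_j}(z_{t,j})$ and using the fact that a product of independent standard normals is the standard multivariate normal with identity covariance gives $p_\psi(\mathbf z_t)=\mathcal N(\mathbf z_t\mid\mathbf 0,\mathbf I)$. I would also note that with $K=1$ the softmax parameterization forces $\pi_{j,1}=1$ automatically. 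The settings $\mu_{j,1}^{(p)}=0$ and $\eta_{j,1}=0$ have to be understood as held fixed, not learned; otherwise the prior would be a learned Gaussian rather than the standard one.

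\emph{Second}, set $\beta=1$ in the implemented objective $\mathcal L$. Its reconstruction part is the single-sample MSE. Up to the multiplicative factor $1/(2\sigma_y^2)$ and an additive constant, this equals the negative of the single-sample Monte Carlo estimate of $\mathbb E_q[\log p_\theta(\mathbf y_t\mid\mathbf z_t)]$ under the Gaussian observation model stated earlier. The regularization part is $\big(\log q_\phi(\widetilde{\mathbf Z}\mid\mathbf Y)-\log\mathcal N(\widetilde{\mathbf Z}\mid\mathbf 0,\mathbf I)\big)/(Tm)$. Taking the expectation over $\epsilon$ shows this is an unbiased single-sample estimator of $\frac1{Tm}\sum_{t}\mathrm{KL}\big(q_\phi(\mathbf z_t\mid\mathbf y_t)\,\|\,\mathcal N(\mathbf 0,\mathbf I)\big)$, which is exactly the standard VAE KL term. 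Combining the two parts, $\mathcal L$ is $-\mathcal J_{\mathrm{ELBO}}/(Tm)$ estimated with one sample, modulo constants and the reconstruction scaling. I would state explicitly that the phrase ``normalized single-sample version'' is meant in this sense: it holds exactly only when $2\sigma_y^2=1$, and otherwise the equivalence is up to a reweighting that $\beta$ absorbs. This interpretive point is the only step I would treat with care, because it determines what ``reduces to'' precisely means.

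\emph{Third}, replace the shared variance $\sigma_j^2$ by an encoder output $\sigma_{t,j}^2=h_\phi(\mathbf y_t)_j$. The posterior is then $q_\phi(\mathbf z_t\mid\mathbf y_t)=\mathcal N\big(\boldsymbol\mu_t,\mathrm{diag}(\boldsymbol\sigma_t^2)\big)$, and the reparameterization becomes $\mathbf z_t=\boldsymbol\mu_t+\boldsymbol\sigma_t\odot\boldsymbol\epsilon_t$. This is precisely the amortized diagonal-Gaussian encoder of the usual VAE. In that case the KL term can be replaced by its closed form $\tfrac12\sum_j(\mu_{t,j}^2+\sigma_{t,j}^2-\log\sigma_{t,j}^2-1)$, which confirms the identification with the standard objective.
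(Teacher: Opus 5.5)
Your proposal is correct and uses the same direct-substitution argument as the paper: each one-component mixture collapses to $\mathcal N(0,1)$, the product over dimensions gives $\mathcal N(\mathbf 0,\mathbf I)$, and the $\beta=1$ objective is identified with a normalized single-sample negative ELBO. Your additions make precise several points the paper leaves implicit. You hold $\mu_{j,1}^{(p)}=0$ and $\eta_{j,1}=0$ fixed rather than learned. You note that the identification is exact only for $\sigma_y^2=1/2$, and that otherwise $\beta$ plays the role of $2\sigma_y^2$. You also spell out the amortized-variance step, which the paper's proof does not address.
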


\noindent\textit{Proof.}
If $K=1$ for every latent dimension and
\begin{equation}
\pi_{j,1}=1,\qquad
\mu_{j,1}^{(p)}=0,\qquad
(\sigma_{j,1}^{(p)})^2=1,
\end{equation}
then each prior factor becomes
\begin{equation}
p_{\psi_j}(z_{t,j})
=
\mathcal{N}(z_{t,j}\mid 0,1).
\end{equation}
Hence
\begin{equation}
p_{\psi}(\mathbf{z}_t)
=
\prod_{j=1}^{n}\mathcal{N}(z_{t,j}\mid 0,1)
=
\mathcal{N}(\mathbf{z}_t\mid \mathbf{0},\mathbf{I}).
\end{equation}
Under $\beta=1$, the implemented objective corresponds to a normalized Monte Carlo form of the negative ELBO used in the standard VAE.
\hfill$\square$

Thus, PDGMM-VAE should be understood not as a disconnected alternative to the VAE, but as a structured extension in which the latent prior is generalized from a shared simple prior to adaptive per-dimension mixture priors.

\subsection{Weak recovery in the linear low-noise limit}

The proposed framework does not by itself furnish a full unconditional nonlinear identifiability theorem. Nevertheless, in an idealized linear regime one can still state a weak recovery result showing how heterogeneous prior structure can reduce the residual equivalence class.

\begin{proposition}[Weak recovery in the linear low-noise limit]
Consider the square linear regime with $m=n$,
\begin{equation}
\mathbf{y}_t = \mathbf{A}\mathbf{s}_t + \boldsymbol{\xi}_t,
\qquad
\mathbf{A}\in\mathbb{R}^{n\times n}\ \text{invertible}.
\end{equation}
where the source components in $\mathbf{s}_t$ are mutually independent, the noise level is small, the decoder family contains all linear maps, the encoder mean family contains all linear maps, and the posterior variances satisfy $\sigma_j^2\to 0$. Assume further that each true source marginal is exactly representable by the designated per-dimension GMM prior family and that the one-dimensional source marginals are pairwise distinct up to possible sign symmetry. Then any optimum that achieves vanishing reconstruction error and exact marginal matching recovers the latent components up to the residual linear ICA equivalence class, and the heterogeneous per-dimension priors eliminate all permutations not preserving identical one-dimensional source marginals. In particular, when the source marginals are pairwise distinct up to sign, the residual permutation ambiguity is removed.
\end{proposition}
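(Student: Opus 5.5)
The plan is to reduce the statement to classical linear ICA identifiability, using the low-noise and zero-posterior-variance limits, and then apply the symmetry analysis of Theorem 2 to the residual permutation.

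\emph{Step 1: deterministic reduction.} As $\sigma_j^2\to 0$ and the noise vanishes, the reparameterized latent $\tilde{\mathbf z}_t$ collapses to the encoder mean $\boldsymbol\mu_t=\mathbf W\mathbf y_t$, with $\mathbf W$ linear. The decoder output becomes $\mathbf B\mathbf W\mathbf y_t$, with $\mathbf B$ linear. Vanishing reconstruction error over a source distribution whose support spans $\mathbb R^n$ (which holds since $\mathbf A$ is invertible and the sources are nondegenerate) forces $\mathbf B\mathbf W\mathbf A=\mathbf I$ on the span of the sources. Hence $\mathbf W$ and $\mathbf B$ are invertible and $\mathbf z_t=\mathbf M\mathbf s_t$ with $\mathbf M:=\mathbf W\mathbf A$ invertible.

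\emph{Step 2: the latents are independent, non-Gaussian, and matched to the priors.} I read ``exact marginal matching'' as the statement that the aggregate law of $\mathbf z_t$ equals the factorized prior $\prod_j p_{\psi_j}$. This is the interpretation under which the zero-KL optimum with $\sigma_j^2\to0$ is attained: the aggregate posterior matches the prior. Under it the components of $\mathbf z=\mathbf M\mathbf s$ are mutually independent.

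I expect this to be the main obstacle. Marginal-only matching would not give independence, so I must either state this interpretation explicitly as an assumption or derive it from optimality of the KL term. The fallback is to argue that any non-factorized aggregate posterior leaves a strictly positive gap, namely a mutual-information term, that is removable within the linear family.

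\emph{Step 3: Darmois--Skitovich.} Since the true sources are independent, at most one of them can be Gaussian. I would rely on the assumption that each source is exactly a GMM with distinct marginals; if needed I would add the standard ICA hypothesis of at most one Gaussian source. The Darmois--Skitovich theorem then implies $\mathbf M=\mathbf P\mathbf D$, with $\mathbf P$ a permutation and $\mathbf D$ diagonal and invertible. This is the residual linear ICA equivalence class.

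\emph{Step 4: removing the scale and the permutation.} Write $z_j=d_j s_{\Pi(j)}$. Matching the marginal of $z_j$ to $p_{\psi_j}$ forces $p_{\psi_j}$ to be the law of $d_j s_{\Pi(j)}$. The priors are designated to represent the true marginals, meaning $p_{\psi_j}$ is the law of $s_j$ up to sign. It follows that $s_j\overset{d}{=}d_j s_{\Pi(j)}$ up to sign. Comparing variances, or the scale of a nondegenerate law, gives $|d_j|=1$. Pairwise distinctness up to sign then forces $\Pi(j)=j$, or more generally $\Pi$ lies in the stabilizer $G_\psi$ of Theorem 2. This yields the claim that only permutations preserving identical one-dimensional marginals survive, and in particular that the permutation ambiguity disappears when the marginals are pairwise distinct up to sign. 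The sign ambiguity $d_j=\pm1$ remains only for symmetric marginals.
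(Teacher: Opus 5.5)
Your route is the paper's: collapse to an invertible linear map $\mathbf M=\mathbf W\mathbf A$, invoke classical linear ICA to reach a scaled permutation, then let the designated priors remove the permutation. Your Step~2 is more careful than the paper's sketch, which never explains why the latent coordinates are independent. Reading exact matching as equality of the law of $\mathbf z=\mathbf W\mathbf y$ with $\prod_j p_{\psi_j}$ is what the Darmois--Skitovich/Comon argument needs. Note that only this aggregate KL can vanish; each per-sample $\mathrm{KL}(q_{t,j}\|p_j)$ diverges as $\sigma_j^2\to0$.

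Two steps fail as written. In Step~3, independence does not imply that at most one source is Gaussian, and distinct GMM marginals do not substitute for that hypothesis. Take $s_1\sim\mathcal N(0,1)$ and $s_2\sim\mathcal N(0,4)$, which are GMMs with $K=1$ and are distinct even up to sign, and set $\Sigma=\mathrm{diag}(1,4)$. Then every $\mathbf M=\Sigma^{1/2}\mathbf O\Sigma^{-1/2}$ with $\mathbf O$ orthogonal gives $\mathbf M\mathbf s\overset{d}{=}\mathbf s$. So ``at most one Gaussian source'' must be imposed outright, not kept as an optional fallback; the paper's sketch imposes it tacitly by speaking of non-Gaussian components.

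The more serious problem is Step~4. From $s_j\overset{d}{=}\pm d_j s_{\Pi(j)}$, comparing variances gives only $d_j^2=\mathrm{Var}(s_j)/\mathrm{Var}(s_{\Pi(j)})$. This equals $1$ only when $\mathrm{Var}(s_j)=\mathrm{Var}(s_{\Pi(j)})$, for instance when you already know $\Pi(j)=j$, so the argument is circular. Distinctness up to sign does not exclude one source being a rescaled copy of another, and in that case the conclusion is false. Let $s_1$ have law $\tfrac12\mathcal N(-1,0.1)+\tfrac12\mathcal N(1,0.1)$, let $s_2\overset{d}{=}2s_1$ be independent of it, and let $p_{\psi_j}$ be the law of $s_j$. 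Take $\mathbf M=\begin{pmatrix}0&1/2\\2&0\end{pmatrix}$, $\mathbf W=\mathbf M\mathbf A^{-1}$ and $\mathbf B=\mathbf A\mathbf M^{-1}$. Reconstruction is exact, and $\mathbf z=(s_2/2,\,2s_1)$ has independent coordinates with law exactly $p_{\psi_1}\otimes p_{\psi_2}$, yet latent~1 carries source~2.

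Hence the surviving permutations are those preserving marginals up to scale and sign, a set that can be strictly larger than $G_\psi$. The final claim needs a stronger hypothesis. One option is that no pair $j\neq k$ admits $c\neq0$ with $s_j\overset{d}{=}c\,s_k$. The other is that the source scales are fixed (e.g.\ unit variance, as is standard in ICA); then your variance comparison does give $|d_j|=1$ and the rest of Step~4 goes through. The paper's sketch has the same blind spot: it argues that a mismatched target marginal is incompatible with exact matching without accounting for the scaling freedom left by the linear ICA equivalence class.
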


\noindent\textit{Proof sketch.}
In the linear low-noise regime, assume an optimum achieves vanishing reconstruction error and posterior variances tending to zero. Then the encoder-decoder composition behaves effectively as an invertible linear transform between the latent variables and the true sources. Classical linear ICA considerations imply that independent non-Gaussian components can only be recovered up to the standard residual equivalence class, consisting essentially of scaling, sign, and permutation ambiguities.

Now impose exact marginal matching with the designated per-dimension GMM priors. Because the prior attached to latent dimension $j$ is intended to represent a specific one-dimensional source marginal, any permutation that maps one latent dimension onto another with a different target marginal is incompatible with exact prior matching. Therefore only permutations preserving identical one-dimensional marginals remain feasible. If the source marginals are pairwise distinct up to sign, no nontrivial permutation remains. This yields the claimed weak ambiguity reduction. \hfill$\square$

Proposition 4 is intentionally weak and local. Its role is not to claim full nonlinear identifiability, but to clarify that in ideal linear low-noise settings the heterogeneous prior can further reduce the ambiguity that remains after classical linear ICA considerations.

\label{subsec:algorithm}

Algorithm~\ref{alg:gmmvae_ica} summarizes the overall training procedure of the proposed PDGMM-VAE.

\begin{algorithm}[htbp]
\footnotesize
\setlength{\algomargin}{0.8em}
\SetAlgoLined
\DontPrintSemicolon
\SetNlSkip{0.2em}
\SetInd{0.2em}{0.4em}
\caption{Training procedure of the proposed PDGMM-VAE}
\label{alg:gmmvae_ica}

\KwIn{Observed mixtures $\mathbf{Y}=\{\mathbf{y}_t\}_{t=1}^{T}$, observation dimension $m$, latent dimension $n$, GMM component number $K$, encoder parameters $\phi$, decoder parameters $\theta$, shared posterior variances $\boldsymbol{\sigma}^2$, prior parameters $\psi=\{\alpha_{j,k},\mu_{j,k}^{(p)},\eta_{j,k}\}$, and weighting coefficient $\beta$.}

\KwOut{Optimized parameters $(\phi,\theta,\boldsymbol{\sigma}^2,\psi)$ and variational posterior $q_{\phi}(\mathbf{Z}\mid\mathbf{Y})$.}

Initialize $(\phi,\theta,\boldsymbol{\sigma}^2,\psi)$\;

\While{not converged}{

Compute $\boldsymbol{\mu}_t=f_\phi(\mathbf{y}_t)$ for $t=1,\dots,T$\;

Compute
$\pi_{j,k}=\dfrac{\exp(\alpha_{j,k})}{\sum_{\ell=1}^{K}\exp(\alpha_{j,\ell})}$
and
$(\sigma_{j,k}^{(p)})^2=\exp(\eta_{j,k})$
for $j=1,\dots,n$, $k=1,\dots,K$\;

Sample one reparameterized latent realization
$\tilde z_{t,j}=\mu_{t,j}+\sigma_j\epsilon_{t,j}$
with $\epsilon_{t,j}\sim\mathcal{N}(0,1)$
for $t=1,\dots,T$, $j=1,\dots,n$\;

Reconstruct $\hat{\mathbf{y}}_t=g_\theta(\tilde{\mathbf z}_t)$ for $t=1,\dots,T$\;

Compute
$\mathcal{L}_{\mathrm{rec}}=\dfrac{1}{Tm}\sum_{t=1}^{T}\|\hat{\mathbf y}_t-\mathbf y_t\|_2^2$\;

Compute
$\log q_{\phi}(\widetilde{\mathbf Z}\mid\mathbf Y)
=
\sum_{t=1}^{T}\sum_{j=1}^{n}
\log\mathcal{N}(\tilde z_{t,j}\mid\mu_{t,j},\sigma_j^2)$\;

\For{$j\leftarrow 1$ \KwTo $n$}{
Compute
$\log p_{\psi_j}(\tilde z_{t,j})
=
\log\!\left[
\sum_{k=1}^{K}
\pi_{j,k}\,
\mathcal{N}\!\left(
\tilde z_{t,j}\mid \mu_{j,k}^{(p)},(\sigma_{j,k}^{(p)})^2
\right)
\right]$
for $t=1,\dots,T$\;
}

Compute
$\log p_{\psi}(\widetilde{\mathbf Z})
=
\sum_{t=1}^{T}\sum_{j=1}^{n}\log p_{\psi_j}(\tilde z_{t,j})$\;

Form
$\mathcal{L}
=
\mathcal{L}_{\mathrm{rec}}
+
\frac{\beta}{Tm}
\left[
\log q_{\phi}(\widetilde{\mathbf Z}\mid\mathbf Y)
-
\log p_{\psi}(\widetilde{\mathbf Z})
\right]$\;

Update $(\phi,\theta,\boldsymbol{\sigma}^2,\psi)$ by backpropagation and an optimizer step\;
}

\Return $(\phi,\theta,\boldsymbol{\sigma}^2,\psi)$ and
$q_{\phi}(\mathbf{Z}\mid\mathbf{Y})
=
\prod_{t=1}^{T}\prod_{j=1}^{n}
\mathcal{N}(z_{t,j}\mid\mu_{t,j},\sigma_j^2)$\;
\end{algorithm}

\subsection{Interpretation of the proposed design}

As illustrated in Figure~1, the proposed PDGMM-VAE can be interpreted as a source-oriented variational autoencoder for nonlinear ICA. The encoder maps the observed mixtures to latent source-like representations, where each latent dimension is treated as an individual source component and regularized by its own adaptive Gaussian mixture prior. The decoder then maps the sampled latent variables back to the observation space and reconstructs the mixtures.

The theoretical analysis above clarifies that the role of the adaptive per-dimension GMM prior is not merely to provide a flexible family of marginal densities. First, the exact factorization and ELBO derivation show that the model remains a well-defined structured-prior variational latent-variable model rather than an ad hoc reconstruction-plus-regularization scheme. Second, the comparison between homogeneous and heterogeneous priors shows that the proposed per-dimension design weakens much of the latent permutation symmetry present under shared priors. Third, the KL analysis shows that the prior term induces component-wise attraction fields, thereby encouraging different latent dimensions to move toward different source-specific marginal structures.

Accordingly, PDGMM-VAE should not be interpreted as already furnishing a full unconditional nonlinear identifiability theorem. Rather, its current theoretical position is that adaptive per-dimension GMM priors provide a principled structured mechanism that reduces symmetry-related ambiguity, induces source-wise specialization through the KL term, and, in idealized linear low-noise regimes, can further reduce the residual equivalence class of source recovery. This is precisely the sense in which adaptive per-dimension prior design plays a central role in guiding latent source recovery in the proposed framework.

\begin{figure}[htbp]
    \centering
    \includegraphics[width=\textwidth]{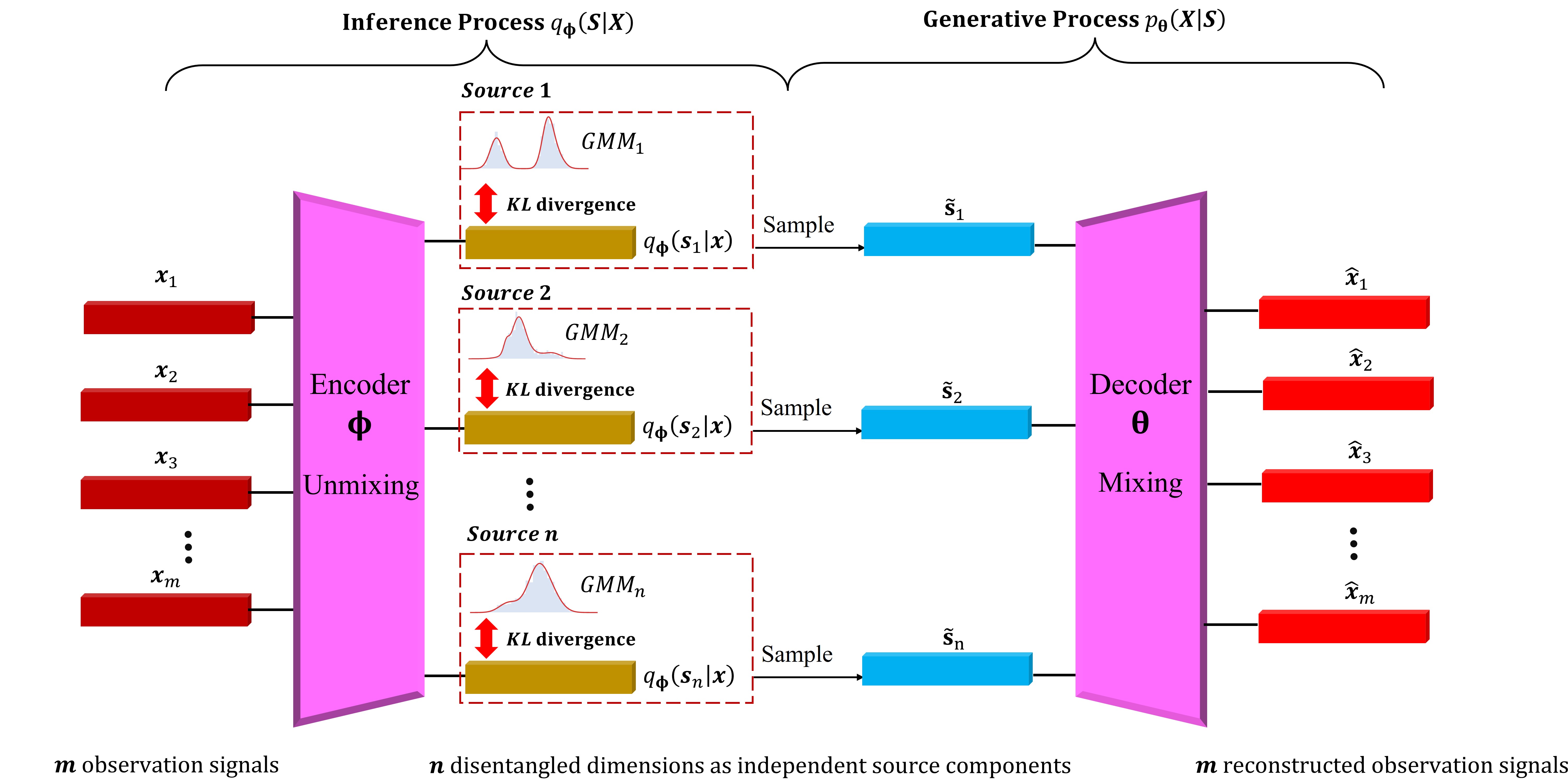}
    \caption{Illustration of the proposed PDGMM-VAE framework. Each latent dimension is regularized by an adaptive Gaussian mixture prior whose parameters are jointly learned during training.}
    \label{fig:pdgmmvae_framework}
\end{figure}

\section{Experimental Study}

\subsection{Linear ICA scenario}

We first evaluate PDGMM-VAE under a linear mixing setting. In this experiment, three i.i.d.\ latent sources with different non-Gaussian marginal distributions are mixed linearly and then recovered by the proposed model. Figure~2 shows the training dynamics. The total loss decreases rapidly and stabilizes, while the posterior variances and the GMM parameters gradually converge to stable values. This indicates that the per-dimension GMM priors are not fixed beforehand, but are adaptively optimized together with the encoder and decoder under the overall objective.

Figure~3 compares the true sources and the inferred posterior means after z-score normalization, together with the posterior uncertainty bands. The recovered source means closely follow the ground-truth sources across all three components, showing that the model captures both the main waveform and the uncertainty structure well. The final absolute correlations between the true sources and the inferred posterior means are
\[
|\mathrm{corr}_1| = 0.9988,\qquad
|\mathrm{corr}_2| = 0.9963,\qquad
|\mathrm{corr}_3| = 0.9907.
\]
These results indicate highly accurate source recovery in the linear case.

Figures~4--6 further compare the true source distributions and the estimated source distributions for the three source components. In each case, the estimated histogram and the learned GMM closely match the corresponding true distribution, showing that the adaptive per-dimension priors can capture source-specific non-Gaussian marginals effectively. It should be noted that the individual GMM component parameters do not need to match the true components one by one exactly. Different combinations of mixture weights, means, and variances may produce very similar overall mixture densities, so agreement at the level of the full distribution is more important than exact component-wise correspondence.

\begin{figure}[htbp]
    \centering
    \includegraphics[width=\textwidth]{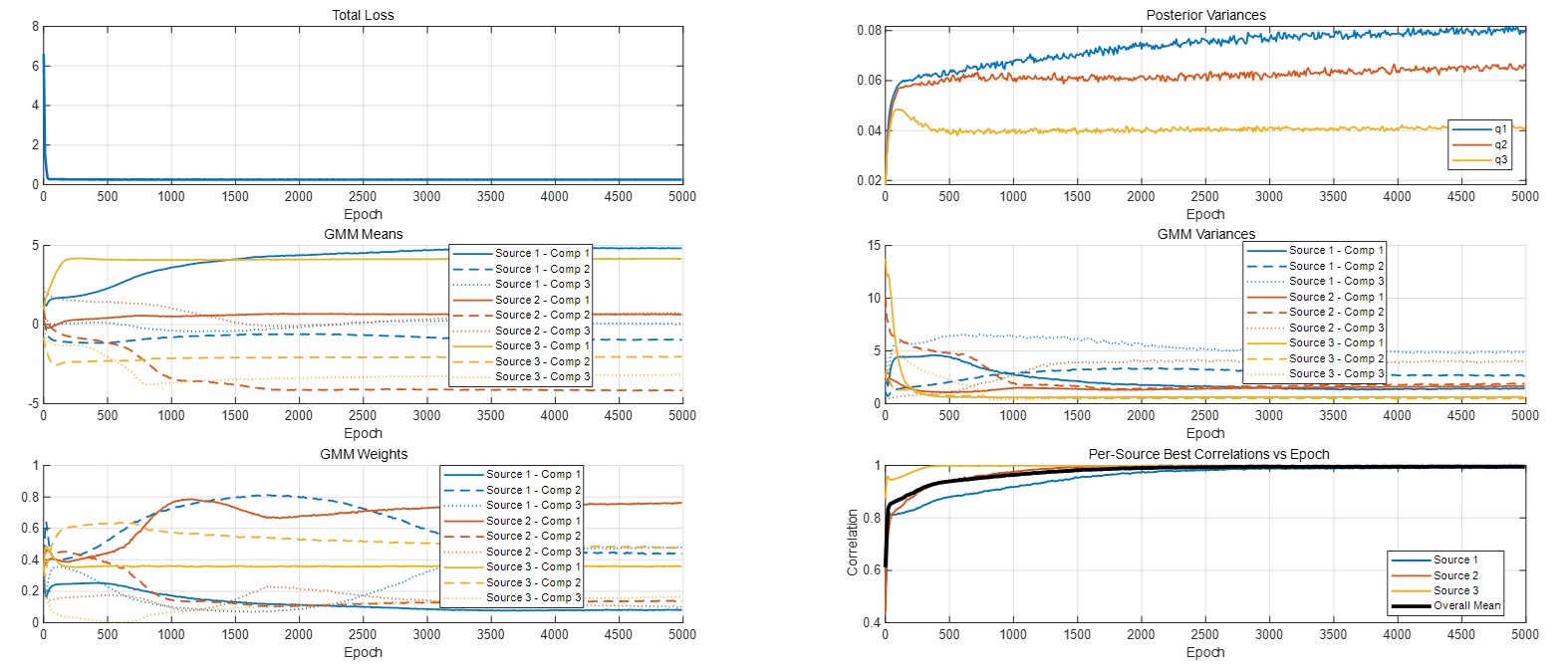}
    \caption{Training curves in the linear ICA experiment, including the total loss, posterior variances, GMM means, GMM variances, GMM weights, and per-source maximum correlations.}
    \label{fig:linear_training}
\end{figure}

\begin{figure}[htbp]
    \centering
    \includegraphics[width=\textwidth]{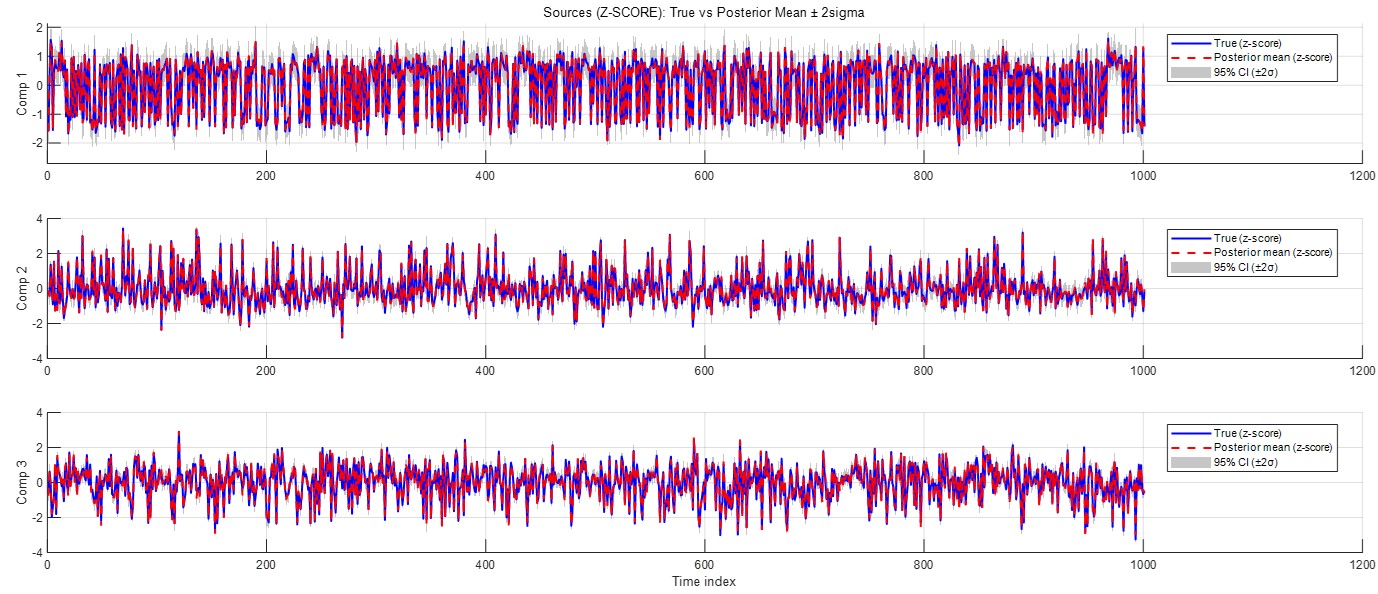}
    \caption{Comparison between the true sources and the inferred posterior means in the linear ICA experiment after z-score normalization. The shaded regions denote the posterior uncertainty bands.}
    \label{fig:linear_sources}
\end{figure}

\begin{figure}[htbp]
    \centering
    \includegraphics[width=\textwidth]{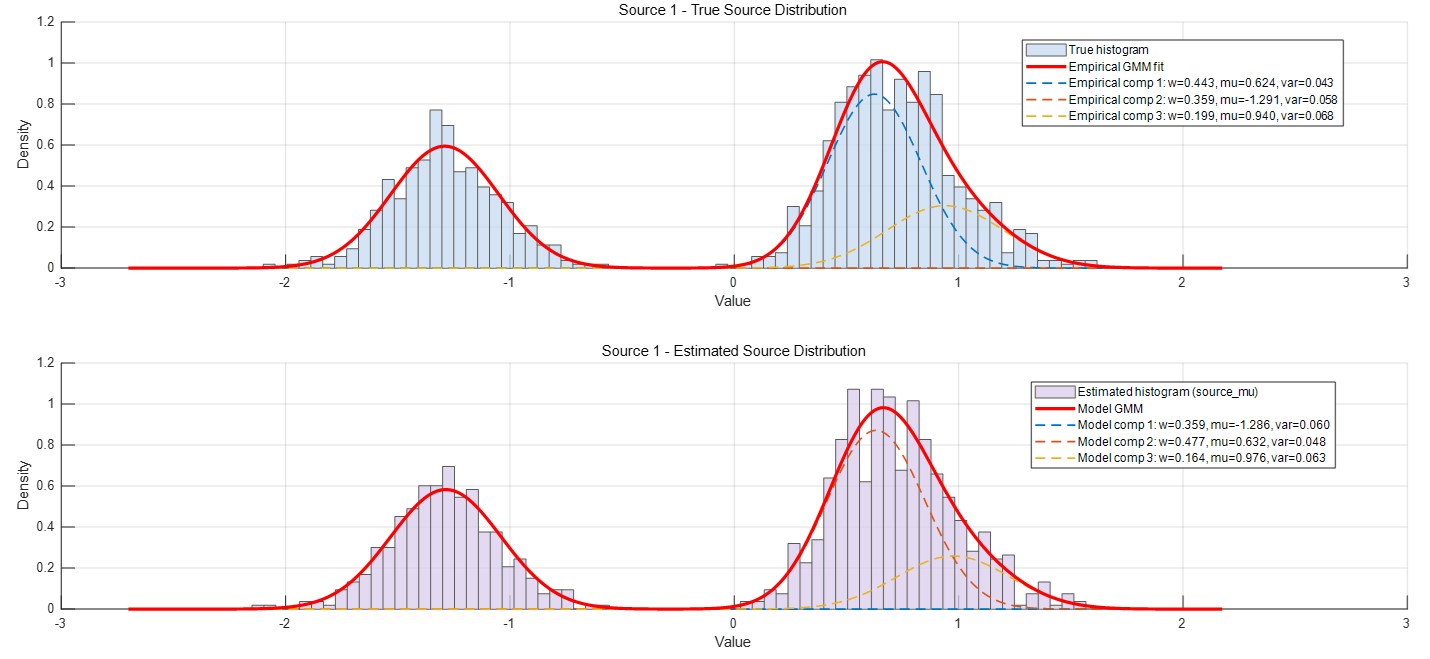}
    \caption{True and estimated source distributions for Source 1 in the linear ICA experiment.}
    \label{fig:linear_gmm_s1}
\end{figure}

\begin{figure}[htbp]
    \centering
    \includegraphics[width=\textwidth]{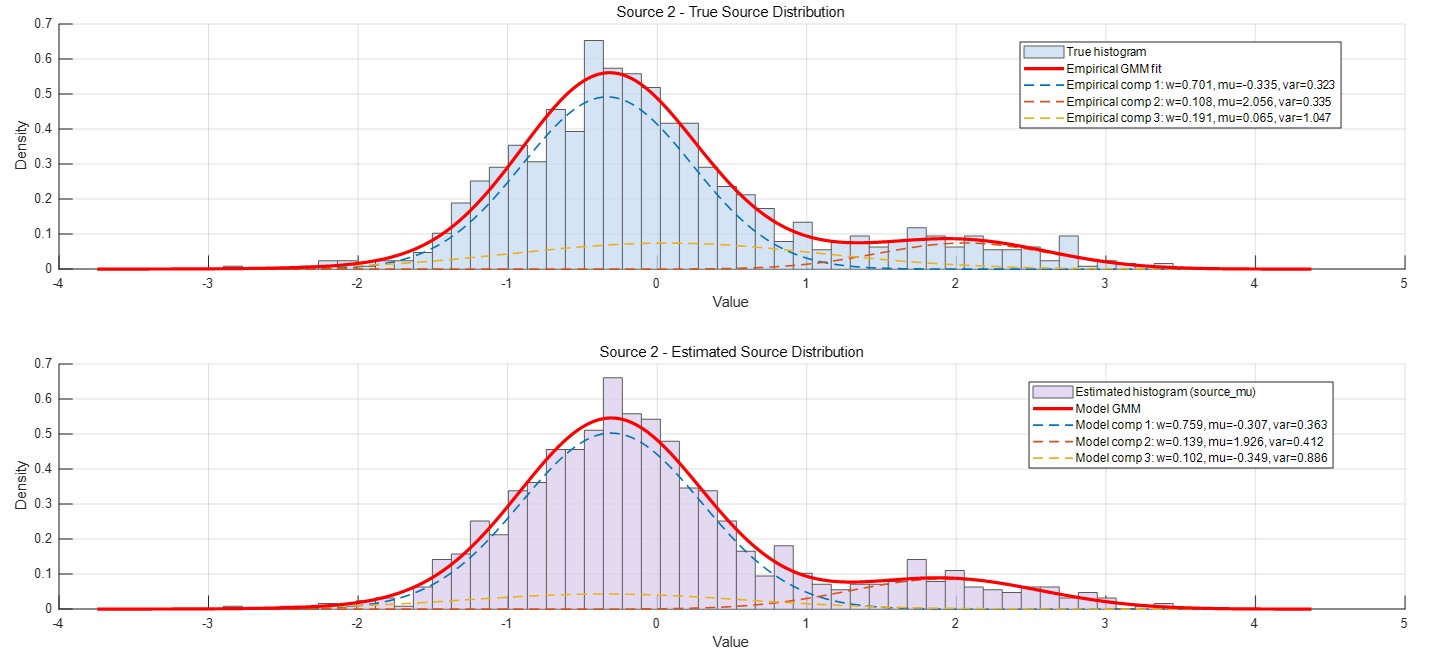}
    \caption{True and estimated source distributions for Source 2 in the linear ICA experiment. }
    \label{fig:linear_gmm_s2}
\end{figure}

\begin{figure}[htbp]
    \centering
    \includegraphics[width=\textwidth]{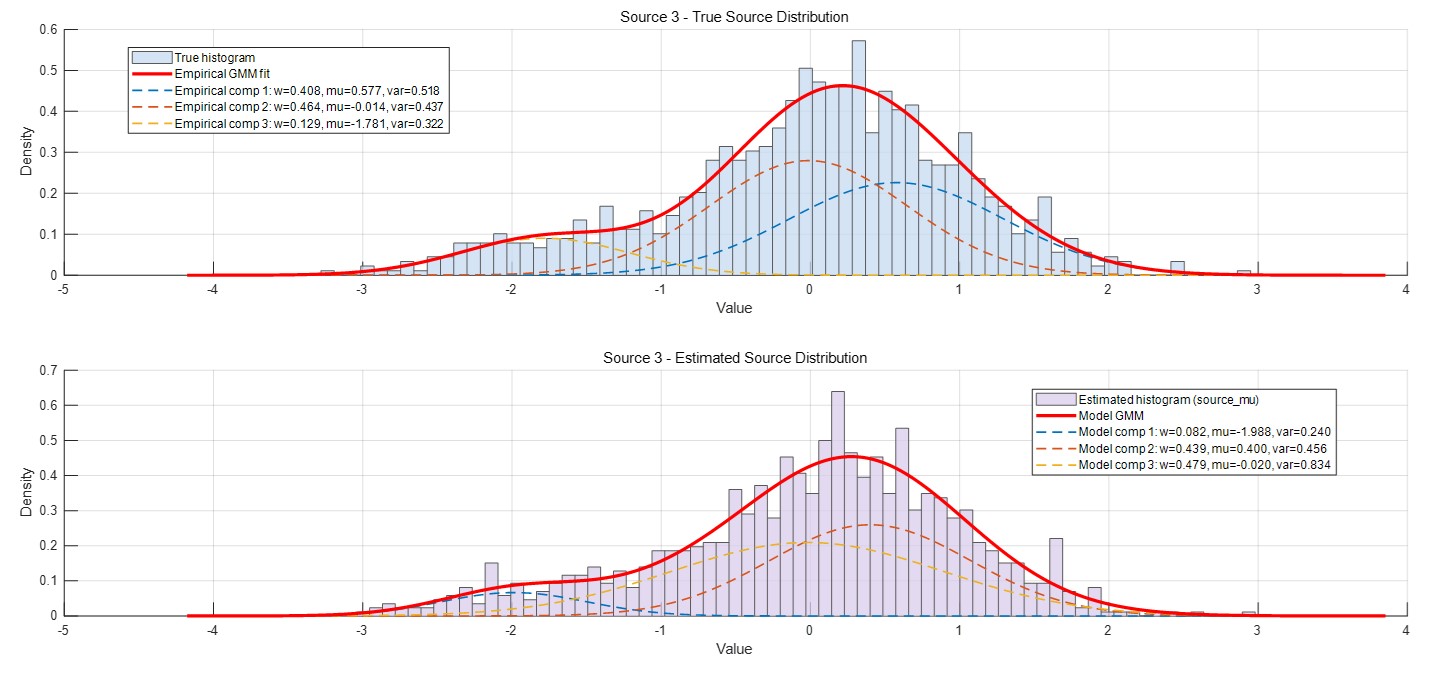}
    \caption{True and estimated source distributions for Source 3 in the linear ICA experiment.}
    \label{fig:linear_gmm_s3}
\end{figure}
\subsection{Nonlinear ICA scenario}

We next evaluate PDGMM-VAE under a nonlinear mixing setting. Starting from the latent source vector $\mathbf{z}_t$, the observations are generated through the following nonlinear transformation:
\begin{equation}
\mathbf{h}_t = \tanh(\mathbf{A}_1 \mathbf{z}_t), \qquad
\mathbf{x}_t = \tanh(\mathbf{A}_2 \mathbf{h}_t),
\end{equation}
where $\mathbf{A}_1$ and $\mathbf{A}_2$ are mixing matrices and $\tanh(\cdot)$ is applied element-wise. This construction introduces nonlinear distortions beyond ordinary linear mixing and therefore provides a more challenging source recovery problem.

Figure~7 shows the training dynamics in the nonlinear case. As in the linear experiment, the total loss decreases rapidly and the posterior variances and GMM parameters gradually stabilize, indicating that the adaptive per-dimension priors remain trainable and converge under the overall objective. Figure~8 compares the true sources and the inferred posterior means after z-score normalization. The recovered posterior means still follow the true sources closely, although the fitting quality is slightly weaker than in the linear case, which is expected under stronger nonlinear distortions. The final absolute correlations are
\[
|\mathrm{corr}_1| = 0.9943,\qquad
|\mathrm{corr}_2| = 0.9693,\qquad
|\mathrm{corr}_3| = 0.9593.
\]
These values show that the proposed model still achieves satisfactory source recovery under nonlinear mixing.

Figures~9--11 compare the true and estimated source distributions for the three latent sources. In all three cases, the learned per-dimension GMM priors remain relatively close to the target marginals and capture the main non-Gaussian structure of each source. As in the linear setting, exact one-to-one correspondence between individual mixture components is not required, since different combinations of component weights, means, and variances may produce very similar overall mixture densities. Overall, although the nonlinear case is clearly more difficult than the linear one, the proposed method still provides acceptable recovery accuracy and distributional fitting performance.

\begin{figure}[htbp]
    \centering
    \includegraphics[width=\textwidth]{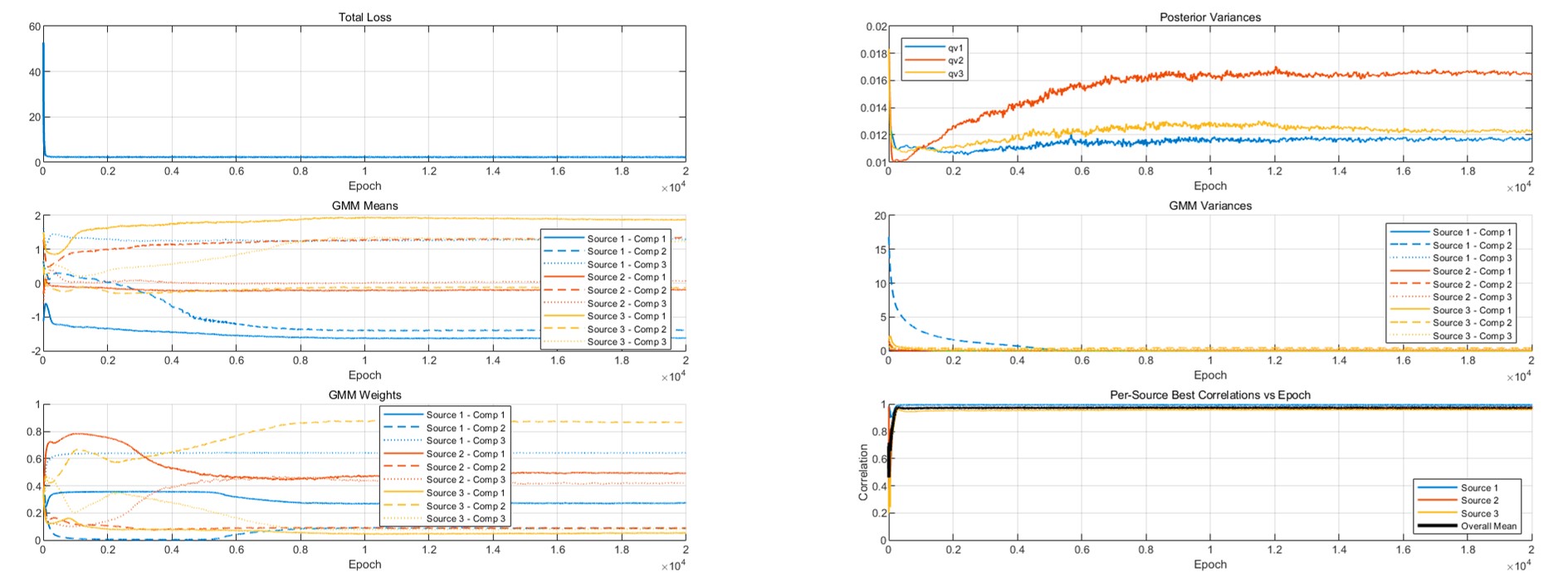}
    \caption{Training curves in the nonlinear ICA experiment, including the total loss, posterior variances, GMM means, GMM variances, GMM weights, and per-source maximum correlations.}
    \label{fig:nonlinear_training}
\end{figure}

\begin{figure}[htbp]
    \centering
    \includegraphics[width=\textwidth]{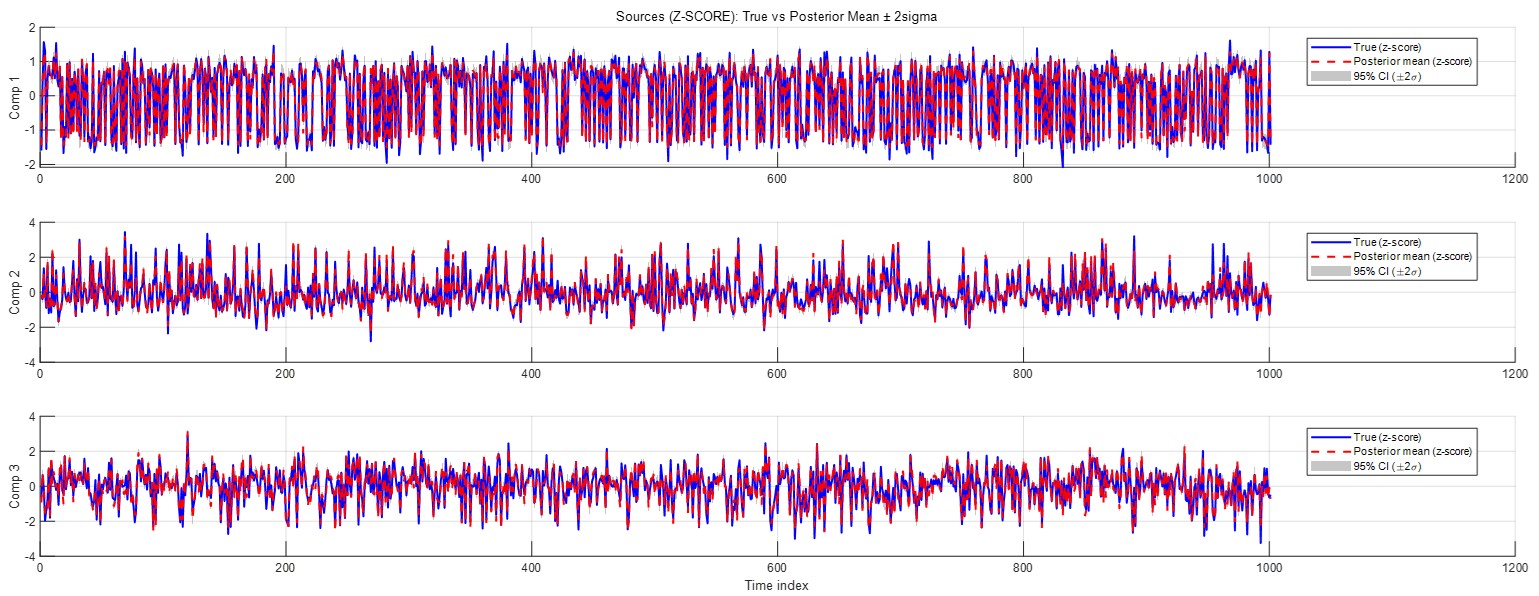}
    \caption{Comparison between the true sources and the inferred posterior means in the nonlinear ICA experiment after z-score normalization. The shaded regions denote the posterior uncertainty bands.}
    \label{fig:nonlinear_sources}
\end{figure}

\begin{figure}[htbp]
    \centering
    \includegraphics[width=\textwidth]{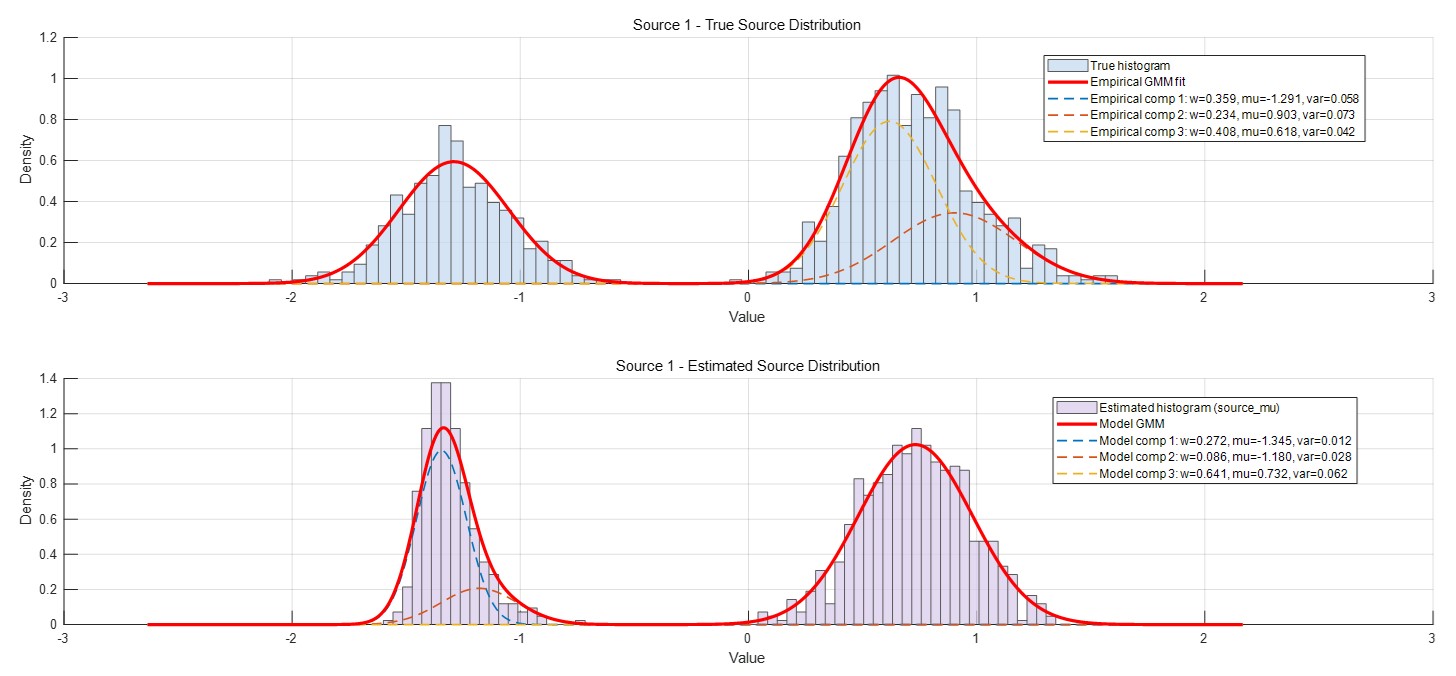}
    \caption{True and estimated source distributions for Source 1 in the nonlinear ICA experiment.}
    \label{fig:nonlinear_gmm_s1}
\end{figure}

\begin{figure}[htbp]
    \centering
    \includegraphics[width=\textwidth]{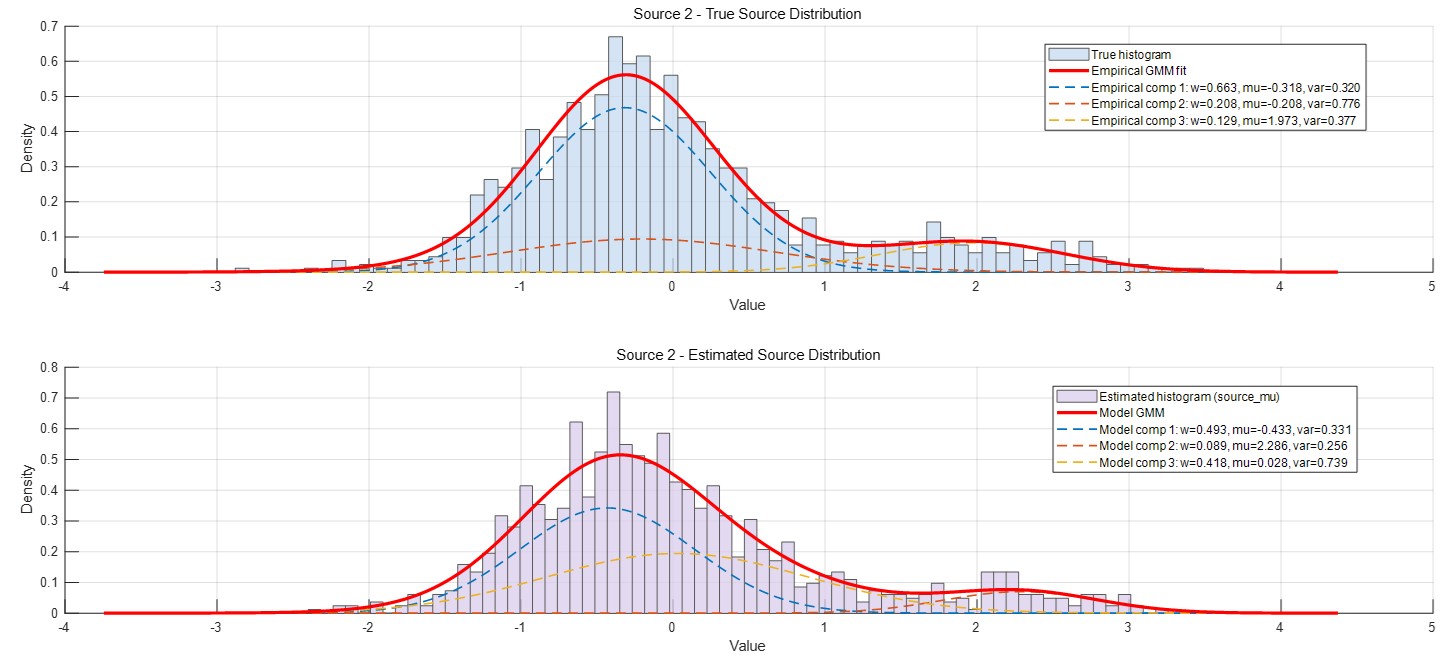}
    \caption{True and estimated source distributions for Source 2 in the nonlinear ICA experiment.}
    \label{fig:nonlinear_gmm_s2}
\end{figure}

\begin{figure}[htbp]
    \centering
    \includegraphics[width=\textwidth]{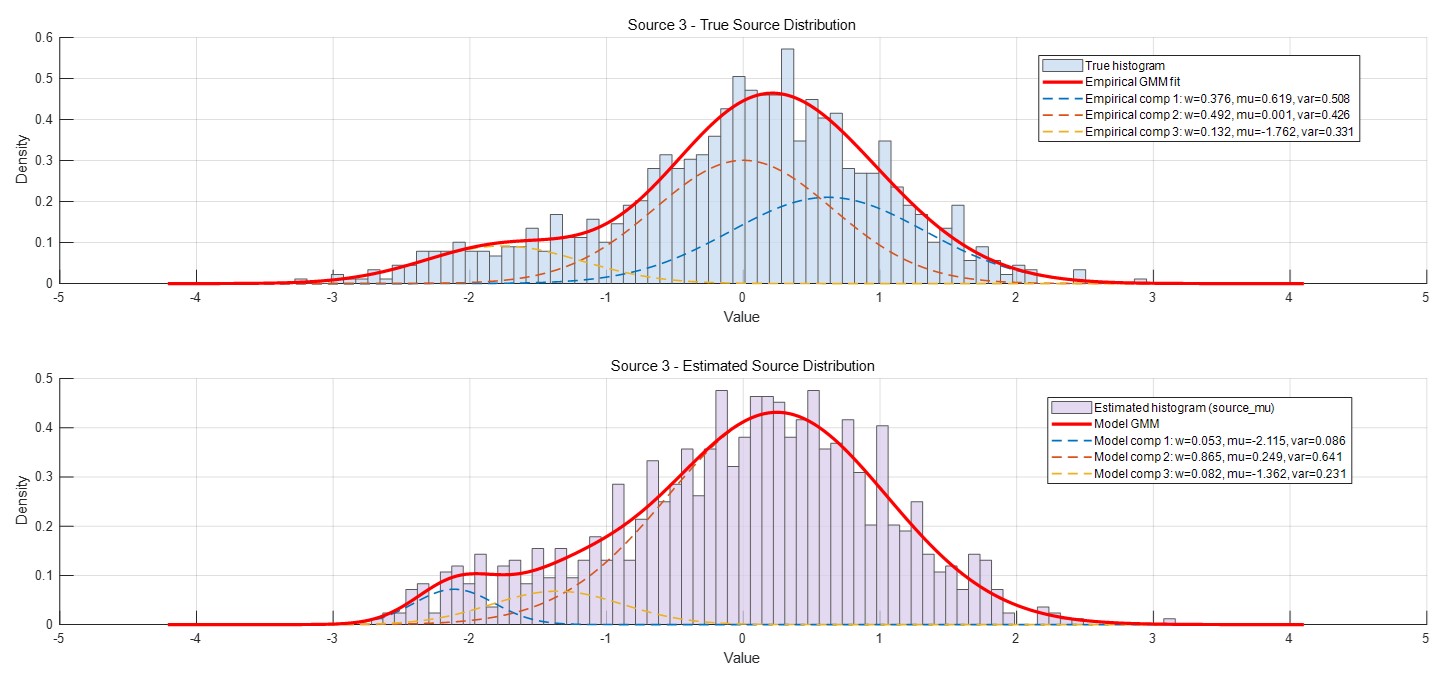}
    \caption{True and estimated source distributions for Source 3 in the nonlinear ICA experiment. }
    \label{fig:nonlinear_gmm_s3}
\end{figure}

\section{Conclusion and Future Work}

In this work, we proposed PDGMM-VAE, an adaptive per-dimension Gaussian-mixture-prior variational autoencoder for ICA-oriented source recovery. In the proposed framework, latent dimensions are interpreted explicitly as source components, and each dimension is assigned its own learnable GMM prior, whose parameters are jointly optimized together with the encoder and decoder under the overall training objective. In this way, the model introduces heterogeneous source-specific prior constraints into a unified probabilistic encoder-decoder framework and enables different latent dimensions to capture different non-Gaussian source marginals.

Beyond the empirical source recovery results obtained in both linear and nonlinear mixing settings, the present paper also provides a clearer theoretical account of why the proposed prior design is meaningful. First, the model is formulated explicitly as a structured-prior variational latent-variable model, together with an ELBO interpretation and an implemented normalized training objective. Second, we show that homogeneous shared priors preserve substantial latent permutation symmetry, whereas heterogeneous per-dimension priors reduce this symmetry and thereby reduce the set of equivalent latent relabelings. Third, we show that the KL regularization induced by the adaptive per-dimension GMM priors admits a component-attraction interpretation, which helps explain why different latent dimensions tend to specialize toward different source roles during training. We also clarify that PDGMM-VAE contains the standard VAE as a special case, and we provide a weak recovery statement in an idealized linear low-noise regime.

At the same time, the current theoretical conclusions should be interpreted with appropriate caution. The present analysis does not establish a full unconditional nonlinear identifiability theorem. Rather, it shows that adaptive per-dimension GMM priors provide a principled mechanism for symmetry reduction, source-wise specialization, and weaker ambiguity reduction in analytically tractable regimes. A more complete nonlinear identifiability theory for adaptive structured-prior VAEs remains an open problem.

Several directions remain for future work. First, although the present analysis clarifies the roles of symmetry reduction and KL-induced specialization, stronger convergence guarantees and sharper optimization characterizations for jointly learned per-dimension mixture priors would still be valuable. Second, the current study focuses on i.i.d.\ source signals and emphasizes source-wise marginal non-Gaussianity rather than explicit temporal or spatial structure. It would therefore be interesting to combine the present per-dimension mixture design with temporal, spatial, or more general dependency-aware priors. Third, it would be valuable to investigate adaptive mixture-prior VAEs under richer theoretical settings, such as conditional structured priors, auxiliary-variable formulations, or other settings in which stronger identifiability results may become available. We hope that the present work can serve as a useful starting point for these developments.

\clearpage

\bibliography{ref}

\end{document}